\newcounter{rulecounter}
\newcommand{\resetrule}{ \setcounter{rulecounter}{0}}
\newsavebox{\selvestebox}
\newenvironment{colbox}[1]
  {\newcommand\colboxcolor{#1}%
   \begin{lrbox}{\selvestebox}%
   \begin{minipage}{\dimexpr\columnwidth-2\fboxsep\relax}}
  {\end{minipage}\end{lrbox}%
   \begin{center}
   \colorbox{\colboxcolor}{\usebox{\selvestebox}}
   \end{center}}
\definecolor{orange}{rgb}{1,0.8,0}
\definecolor{gray}{rgb}{.9,0.9,0.9}
\definecolor{darkgray}{rgb}{.3,0.3,0.3}
\definecolor{darkblue}{rgb}{.1,0.0,0.3}
\definecolor{lightblue}{rgb}{0.7,0.7,1}
\definecolor{lightred}{rgb}{1,0.7,.7}
\definecolor{purple}{RGB}{204,153,255}
\definecolor{lightgray}{rgb}{.95,0.95,0.95}
\definecolor{lightgreen}{rgb}{0.3,0.5,0.3}
\definecolor{darkgreen}{rgb}{0.05,0.3,0.05}
\newcommand{\ra}{$\rightarrow$~}
\newcommand{\inv}{^{-1}}
\newcommand{\rfield}{\mathbb{R}}
\newcommand{\diag}[1]{\mathop{\rm diag}({#1})}
\newcommand{\tr}[1]{\mathop{\rm tr}\left(#1\right)}
 \newcommand{\define}{:=}
\newcommand{\minimize}{\mathop{\text{minimize}}}
\newcommand{\st}{\mathop{\text{s.t.}}}
\newtheorem{myproposition}{Proposition}
\newtheorem{myremark}{Remark}
\newtheorem{myproblemstatement}{Problem Statement}
\newtheorem{mylemma}{Lemma}
\newtheorem{mytheorem}{Theorem}
\newtheorem{mycorollary}{Corollary}
\pgfplotsset{compat=newest}
\pgfplotsset{plot coordinates/math parser=false}
\newlength\mywidth
\newlength\myheight
\definecolor{mycolorBL1}{rgb}{0.00000,0.54700,0.54100}%
\definecolor{mycolorBL2}{rgb}{0.85000,0.92500,0.09800}%
\definecolor{mycolorBL3}{rgb}{0.92900,0.69400,0.72500}%
\definecolor{mycolor1}{rgb}{0.00000,0.44700,0.74100}%
\definecolor{mycolor2}{rgb}{0.85000,0.32500,0.09800}%
\definecolor{mycolor3}{rgb}{0.92900,0.69400,0.12500}%
\definecolor{mycolor4}{rgb}{0.89400,0.18400,0.15600}%
\definecolor{mycolor5}{rgb}{0.46600,0.67400,0.18800}%
\definecolor{mycolor6}{rgb}{0.30100,0.74500,0.93300}%
\definecolor{mycolor7}{rgb}{0.63500,0.07800,0.18400}%
\definecolor{colorJSIGoT}{rgb}{1,0.0032500,0.001}%
\newcommand{\gditerind}{\tau}
	\newcommand{\gditernot}[1]{^{(#1)}}
\newcommand{\gdstep}{\hc{\theta}}
\newcommand{\timeind}{\hc{t}}
\newcommand{\regsmoothadj}{\hc{\mu}_A}
\newcommand{\timepluslagnum}{\hc{t_w}}
\newcommand{\timenot}[1]{_{#1}}
\newcommand{\lagnot}[1]{^{(#1)}}
\newcommand{\timenum}{\hc{T}}
\newcommand{\timegiventimenot}[2]{_{{#1}|{#2}}}
\newcommand{\errorcovkf}{\hc{\bm \Sigma}}
\newcommand{\lagnum}{\hc{w}}
\newcommand{\obsrowset}{\hc{\mathcal{R}}}
\newcommand{\obscolumset}{\hc{\mathcal{C}}}
\newcommand{\signalfun}{{\hc{y}}}
\newcommand{\signalvec}{\hc{\mathbf{ \signalfun}}}
\newcommand{\signalfulvec}{\hc{\bm{\psi}}} 
\newcommand{\signalmat}{\hc{\mathbf{ Y}}}
\newcommand{\adjacencymatrow}{\hc{\underline{\mathbf{a}}}}
\newcommand{\adjacencymatrowother}{\hc{\underline{\mathbf{b}}}}
\newcommand{\signalestvec}{\hc{\hat{\mathbf{\signalfun}}}} 
\newcommand{\observationnoisefun}{\hc{\epsilon}}
\newcommand{\observationnoisevec}{\bm{\observationnoisefun}}
\newcommand{\lossfunction}{\mathcal{F}}
\newcommand{\observationfun}{{\hc{z}}} 
\newcommand{\observationvec}{\hc{\mathbf{\observationfun}}} 
\newcommand{\observationvecwithmis}{\hc{\tilde{\mathbf{\observationfun}}}} 
\newcommand{\observationfunwithmis}{\hc{\tilde{\observationfun}}}
\newcommand{\graph}{\hc{\mathcal{G}}}
\newcommand{\vertexset}{\hc{\mathcal{V}}}
\newcommand{\noiseadjacencymat}{\hc{\mathbf{{E}}}}
\newcommand{\noiseadjacencymatentry}{\hc{{E}}}
\newcommand{\identitymat}{\hc{\mathbf{ I}}}
\newcommand{\adjacencymat}{\hc{\mathbf{ A}} }
\newcommand{\adjacencyzeromat}{\hc{\mathbf{ A}}^{(0)} }
\newcommand{\adjacencyzeromatentry}{\hc{ a}^{(0)} }
\newcommand{\adjacencyonematentry}{\hc{ a}^{(1)}}
\newcommand{\adjacencyonemat}{\hc{\mathbf{ A}}^{(1)}}
\newcommand{\adjacencyestmat}{\hc{\hat{\mathbf{ A}}}} 
\newcommand{\adjacencyestmatentry}{\hc{\hat{a}}}
\newcommand{\adjacencymatentry}{\hc{ a}} 
\newcommand{\adjacencyset}{\hc{\mathcal{A} }} 
\newcommand{\layeridex}{\hc{t}}
\newcommand{\layernot}[1]{{_{#1}}}  %
\newcommand{\layernum}{{\hc{T}}}
\newcommand{\laplacianevec}{\hc{\mathbf{ u}} }
\newcommand{\vertexind}{{\hc{{n}}}}
\newcommand{\vertexindp}{{\hc{{\vertexind}'}}} 
\newcommand{\vertexnum}{{\hc{{N}}}}
\newcommand{\laplacianmat}{\hc{\mathbf{ L}} }
\newcommand{\sampleset}{\hc{\mathcal{M}}} 
\newcommand{\samplemat}{\hc{\mathbf{ M}} }
\newcommand{\samplematwithmis}{\hc{\tilde{\mathbf{ M}}} }
\newcommand{\samplematwithmisentry}[3]{\hc{\tilde{ m}}^{(#3)}_{#1,#2} }
\newcommand{\sampleind}{{\hc{m}}} 
\newcommand{\samplenum}{{\hc{M}}} 
\newcommand{\transpose}{^{\hc{\top}}}
\newcommand{\regpar}{\hc{\mu}}
\newcommand{\regparadjone}{\hc{\lambda}_1}
\newcommand{\regparadjtwo}{\hc{\lambda}_2}
\newcommand{\regfun}{\hc{\rho}}
\newcommand{\regfunelnet}{\hc{\rho_{e}}}
\newcommand{\vertexvertexnot}[2]{_{#1,#2}}  %
\newcommand{\vertexnot}[1]{_{#1}}  %
\newcommand{\seedzeromat}{\hc{\mathbf{ D}_0}}
\newcommand{\seedmat}{\hc{\mathbf{ D}}}
\newcommand{\seedmatentry}{\hc{ D}}
\newcommand{\bandwidth}{\hc{B}}
\newcommand{\vertlayernot}[2]{_{{#1}{#2}}}
\newcommand{\vertimenot}[2]{_{{#1}{#2}}}
\newcommand{\semnoisefun}{\hc{\eta}}
\newcommand{\semnoisevec}{\bm{{\semnoisefun}}}
\newcommand{\singinfeonefun}{\hc{g}}
\newcommand{\iterateindex}{{\hc{i}}}
\newcommand{\iteratenot}[1]{{\hc{(}{#1}\hc{)}}}
\newcommand{\zerostatevec}{\hc{\bm\mu}\timenot{0}}
\newcommand{\transmat}{\hc{\mathbf{F}}}
\newcommand{\transfulmat}{\hc{\bm{\Phi}}}
\newcommand{\transcov}{\hc{\mathbf{Q}}}
\newcommand{\corfulmat}{\hc{\bm{\Omega}}}
\newcommand{\auxadj}{\tilde{\adjacencymat}}
\newcommand{\coverrror}{\hc{\mathbf{\Sigma}}}
\newcommand{\kalmansmoothgain}{\hc{\mathbf{G}}^s}
\newcommand{\kalmanfiltgain}{\hc{\mathbf{G}}^f}
\newcommand{\signalmissmat}{\signalmat^{\Omega}}
\newcommand{\observationmatwithmis}{\hc{\tilde{\mathbf{Z}}}}
\newcommand{\observationfulvec}{\hc{\bm{\zeta}}}
\newcommand{\observationmatwithmisrow}{\hc{\tilde{\underline{\mathbf{z}}}}}
\newcommand{\extadjacencymat}{\hc{\bar{\mathbf{A}}}}
\newcommand{\extsignalvec}{\hc{\bar{\mathbf{\signalfun}}}}
\newcommand{\adjacencymathelp}{\hc{\bm{\Psi}}}
\newcommand{\admmhelp}{\hc{\boldsymbol{U}}}
\newcommand{\admmreg}{\hc{\rho}}
\newcommand{\cormat}{\hc{\boldsymbol{R}}}
\newcommand{\crosscormat}{\hc{\boldsymbol{C}}}
\newcommand{\admmnot}[1]{\hc{[}{#1}\hc{]}}
\newcommand{\admmiter}{i}
\newcommand{\sparsitynum}{\hc{S}}
\theoremstyle{plain}
\newtheorem{mytheoremhere}{Theorem}
\newtheorem{mypropositionhere}{Proposition}
\newtheorem{mydefinition}{Definition}
\theoremstyle{definition}
\newtheorem{myremarkhere}{Remark}
\newcommand{\cmt}[1]{} 
\newcommand{\hc}[1]{\textcolor{black}{#1}} 
\newcolumntype{Y}{>{\centering\arraybackslash}X}
\newif\ifshowtikz
\let\oldtikzpicture\tikzpicture
\let\oldendtikzpicture\endtikzpicture
\renewcommand*\env@matrix[1][*\c@MaxMatrixCols c]{%
\hskip -\arraycolsep
\let\@ifnextchar\new@ifnextchar
\array{#1}}
\begin{document}

\title{Semi-Blind Inference of Topologies and\\
	Dynamical Processes over Graphs } 
%
\author{Vassilis N. Ioannidis, Yanning Shen, and Georgios B. 
Giannakis
\thanks{
The work in this paper was supported by 
NSF grants  171141,  1500713, and  1442686.}
\\ECE Dept. and Digital Tech. Center, Univ. of Minnesota, Mpls, MN 55455, USA\\
E-mails: $\{$ioann006, shenx513, georgios$\}$ @umn.edu
}

\maketitle

\begin{abstract}
Network science provides valuable insights across  numerous disciplines including sociology, biology, neuroscience and engineering. A task of major practical importance in these application domains is inferring the network	structure from noisy observations at a subset of nodes. Available methods for topology inference typically assume that the  process over the network is  observed at all nodes. However, application-specific constraints may prevent acquiring network-wide observations.  Alleviating the limited flexibility	of existing approaches, this work advocates structural models for  graph processes and develops novel algorithms for joint inference of the network topology and processes from partial nodal observations. Structural equation models (SEMs) and structural vector autoregressive models (SVARMs) have well-documented merits in identifying even directed topologies of complex	graphs; while SEMs capture contemporaneous causal dependencies among nodes, SVARMs further account for time-lagged influences. This paper develops algorithms that iterate between inferring 	directed graphs that ``best" fit the data, and estimating the network processes at reduced computational complexity by leveraging tools related to Kalman smoothing.	To further accommodate delay-sensitive applications, an online joint inference approach is put forth that even tracks time-evolving topologies. Furthermore, conditions for identifying the network topology given partial observations are specified. It is proved that the required number of observations for unique identification reduces	significantly when the network structure is sparse. Numerical tests with  synthetic as well as real datasets corroborate the effectiveness of the novel approach.
\end{abstract}
\begin{IEEEkeywords}
Graph signal reconstruction, topology inference, directed graphs, structural (vector) equation models.
\end{IEEEkeywords}

\section{Introduction}
Modeling vertex attributes as  processes that take values over a graph allows for data processing tasks, such as filtering, inference, and compression, while accounting for information captured by the network topology~\cite{shuman2013emerging,kolaczyck2009}.
However, if the  topology is unavailable, inaccurate or even unrelated to the process of interest,  performance of the associated  task may degrade severely. For example, consider a social graph where the goal is to predict the salaries of all individuals given the salaries of some. Graph-based inference approaches that assume smoothness of the salary over the given graph, may fall short if the salary is dissimilar among friends.

Topology identification is possible when observations at all nodes are available by employing structural models, see e.g., \cite{kaplan09}. However, in  many real settings one can only afford to collect nodal observations from a subset of nodes due to application-specific restrictions. For example, sampling all nodes may be prohibitive in massive graphs; in social networks individuals may be reluctant to share personal information due to privacy concerns; in sensor networks, devices may report  measurements sporadically to save energy; and in gene regulatory networks,  gene expression data may contain misses due  to experimental errors. In this context, the present paper relies on SEMs~\cite{kaplan09}, and SVARMs~\cite{chen2011vector} and aims at  \emph{jointly} inferring  the network topology and estimating graph signals, given noisy observations at subsets of nodes.

SEMs provide a statistical  framework for inference of causal relationships among  nodes~\cite{kaplan09,giannakis2017tutor}. Linear SEMs have been widely adopted in fields as diverse as sociometrics~\cite{goldberger1972structural}, psychometrics~\cite{muthen1984general}, recommender systems~\cite{ning2011slim}, and genetics~\cite{cai2013inference}. Conditions for identifying the network topology under the SEM  have been also provided~\cite{bazer2013ident,shen2017tensor}, but require observations of the process at all nodes.  Recently, nonlinear SEMs  have been developed  to also capture nonlinear 
interactions~\cite{shen2017tsp}. On the 
other hand, SVARMs postulate that nodes further exert time-lagged dependencies 
on one another, and are appropriate for modeling multivariate time 
series~\cite{chen2011vector}. Nonlinear SVARMs have been employed to identify 
directed dependencies between regions of interest  in the 
brain~\cite{shen2016nonlineartmi}. Other approaches identify undirected 
topologies provided that the  graph signals are smooth over the 
graph~\cite{dong2015learning}; or, that the observed process is graph-bandlimited~\cite{sardel16bandl}.
All these contemporary approaches assume that samples of the graph process are available over all nodes. However, acquiring network-wide observations may incur 
prohibitive sampling costs, especially for massive networks. 

Methods for inference of graph signals (or processes), typically assume that 
the network topology is \emph{known} and \emph{undirected}, and the graph 
signal is \emph{smooth}, in the sense that neighboring vertices have
similar  values~\cite{smola2003kernels}.  Parametric 
approaches  adopt the graph-bandlimited 
model~\cite{anis2016proxies,narang2013localized}, which 
postulate that the signal  lies in a graph-related $\bandwidth$-dimensional 
subspace; see  \cite{lorenzo2016lms} for time-varying signals. Nonparametric 
techniques  employ kernels on graphs for inference~\cite 
{smola2003kernels,romero2016spacetimekernel}; see also \cite{ioannidis2016semipar} 
for semi-parametric alternatives. Online data-adaptive  algorithms for 
reconstruction of dynamic processes over dynamic graphs have been proposed 
in~\cite{ioannidis2017kriged}, where kernel dictionaries are generated from the 
network topology. However, performance of the aforementioned techniques may 
degrade when the process of interest is not smooth over the adopted graph.	

To recapitulate, existing approaches either infer the graph process given the 
known topology and nodal observations, or estimate the network topology given 
the process values over all the nodes. The present paper fills this gap by 
introducing algorithms based on SEMs and SVARMs for joint inference of network 
topologies and  graph processes over the underlying graph. The approach is 
\emph{semi-blind}  because it performs the joint estimation task with only 
\emph{partial} observations over the network nodes.  Specifically, the 
contribution is threefold.
\begin{itemize}	
\item[C1.]  A novel  approach is proposed for \emph{joint} inference of 
\emph{directed} network topologies and  signals over the underlying 
graph using SEMs. An efficient algorithm is developed with provable 
convergence at least to a stationary point. 

\item[C2.] To further accommodate temporal dynamics, we advocate  a 
SVARM to infer dynamic processes and graphs. A batch solver is provided that 
alternates between topology estimation and signal inference  with linear  complexity across time. Furthermore, a novel online algorithm is developed that 
performs real-time joint estimation, and tracks time-evolving topologies.

\item[C3.] Analysis of the partially observed noiseless SEM is  
provided that establishes sufficient conditions for identifiability of 
the unknown topology. These conditions suggest that the required 
number of  observations for identification reduces	
significantly when the network exhibits edge sparsity.

\end{itemize}

The rest of the paper is organized as follows. Sec. \ref{sec:problform} reviews 
the SEM and SVARM, and states the problem. Sec. 
\ref{sec:jointinferalgo} presents a novel estimator for joint inference based 
on SEMs.  Sec. \ref{sec:svar} develops both batch and online algorithms for 
inferring dynamic processes and  networks using SVARMs. 
Sec.~\ref{sec:identanal} presents the identifiability results of the partially 
observed SEM.
Finally, numerical experiments and conclusions are presented in 
Secs.~\ref{sec:sims} and~\ref{sec:concl}, respectively. 

\emph{Notation:}
Scalars are denoted by
lowercase, column vectors by bold lowercase, and matrices 
by bold
uppercase letters. Superscripts $~\transpose$ and $~\inv$
respectively 
denote transpose and inverse; while $\boldsymbol 1_N$ stands for the $N\times1$ all-one 
vector. Moreover, 
$[\mathbf{A}]_{i,j}$ denotes a block entry of appropriate size. 
Finally, if $\mathbf A$ 
is a matrix and $\mathbf x$ a vector, then $||\mathbf x||^2_{\mathbf
A}:= \mathbf x\transpose \mathbf A\inv \mathbf x$,  $||\mathbf x
||_2^2:= \mathbf x\transpose \mathbf x$, $\|\mathbf A\|_1$ denotes 
the 
$\mathcal{L}_1$-norm of the 
vectorized matrix, and $\|\mathbf A\|_F^2$ is the Frobenius norm of $ 
\mathbf A$.
%

\section{Structural models and problem formulation}
\label{sec:problform}
Consider a network with $\vertexnum$ nodes modeled by the 
graph  $\graph:=(\vertexset,\adjacencymat)$, where 
$\vertexset:=\{v_1, \ldots, v_\vertexnum\}$ is the set of vertices and 
$\adjacencymat$ denotes the $\vertexnum\times \vertexnum$ adjacency matrix, 
whose $(\vertexind,\vertexindp)$-th  entry 
$\adjacencymatentry\vertexvertexnot{\vertexind}{\vertexindp}$ represents the 
weight of the directed edge from $v_\vertexindp$ to $v_\vertexind$. 
\cmt{linear SEM}A real-valued process (or 
signal) on $\graph$ is a map $\signalfun\layernot{\layeridex} : \mathcal{V} 
\rightarrow
\mathbb{R}$.
In social networks (e.g., 
Twitter) over which information diffuses $\signalfun\vertlayernot{\vertexind}{\layeridex}$ could represent the 
timestamp when 
subscriber $n$ tweeted about a viral story $t$.	Since real-world networks often 
exhibit edge sparsity, 	$\adjacencymat$ has only a few nonzero entries. 
\subsection{Structural models}
The linear 
SEM\cite{goldberger1972structural} postulates that  
$\signalfun\vertlayernot{\vertexind}{\layeridex}$ depends linearly 
on 
$\{\signalfun\vertlayernot{\vertexindp}{\layeridex}
\}_{\vertexindp\ne\vertexind}$,
that amounts to
\begin{align}
\label{eq:semscalar}
\signalfun\vertlayernot{\vertexind}{\layeridex}= 
\sum_{\vertexind\ne\vertexindp} 
\adjacencymatentry\vertexvertexnot{\vertexind} 
{\vertexindp}\signalfun\vertlayernot{\vertexindp}{\layeridex} 
+\semnoisefun\vertlayernot{\vertexind}{\layeridex}
\end{align}
where the unknown $	\adjacencymatentry\vertexvertexnot{\vertexind} 
{\vertexindp}$ captures the causal influence of node 	$v_{\vertexindp}$ upon 
node  
$v_{\vertexind}$,  and 
$\semnoisefun\vertlayernot{\vertexind}{\layeridex}$ accounts for unmodeled 
dynamics.  Clearly, $\eqref{eq:semscalar}$  suggests that 
$\signalfun\vertlayernot{\vertexind}{\layeridex}$ is influenced  directly by nodes in its neighborhood $\mathcal{N}_\vertexind:=\{v_\vertexindp: 
\adjacencymatentry\vertexvertexnot{\vertexind} 
{\vertexindp}\ne0\}$.
With the $\vertexnum\times1$ 
vectors 
$\signalvec\layernot{\layeridex}:=[\signalfun\vertlayernot{1}{\layeridex},
\ldots,\signalfun\vertlayernot{\vertexnum}{\layeridex}]\transpose$, 
and 
$\semnoisevec\layernot{\layeridex}:=[\semnoisefun\vertlayernot{1}{\layeridex},
\ldots,\semnoisefun\vertlayernot{\vertexnum}{\layeridex}]\transpose$, 
\eqref{eq:semscalar} can 
be written 
in matrix-vector form as
\begin{align}
\label{eq:semvec}
\signalvec\layernot{\layeridex} =\adjacencymat\signalvec\layernot{\layeridex}
+\semnoisevec\layernot{\layeridex},~\layeridex=1,\ldots,\layernum.
\end{align}
SEMs  have been successful in a host of applications, including
gene regulatory networks \cite{cai2013inference}, and recommender systems
\cite{ning2011slim}. Therefore, the index $\layeridex$ does not necessarily 
indicate time, but may represent different  individuals (gene regulatory 
networks), or  movies (recommender systems). 
An interesting consequence emerges if one considers 	
$\semnoisevec\layernot{\layeridex}$ as a random process with 
$\semnoisevec\layernot{\layeridex} 
\sim\mathcal{N}(\boldsymbol{0},\sigma^2_{\layeridex}\identitymat_\vertexnum)$. 
Thus,  \eqref{eq:semvec} can be written as 	
$\signalvec\layernot{\layeridex}=(\identitymat_\vertexnum- \adjacencymat)\inv 
\semnoisevec\layernot{\layeridex}=\sum_{n=0}^{\infty}\adjacencymat^n 
\semnoisevec\layernot{\layeridex}$ with 	
$\signalvec\layernot{\layeridex}\sim\mathcal{N}(\boldsymbol{0},	
\sigma^2_{\layeridex}\sum_n\adjacencymat^n)$ having covariance matrix 
$\mathbf{C}_{\signalvec\layernot{\layeridex}}:=
\sigma^2_{\layeridex}\sum_n\adjacencymat^n$. Matrices
$\mathbf{C}_{\signalvec\layernot{\layeridex}}$ and $\adjacencymat$ are 
simultaneously diagonalizable, and hence  $\signalvec\layernot{\layeridex}$  
is  a graph stationary process 
\cite{marques2016stationary}.

\begin{figure}[t]
\centering
\includegraphics[width=8.cm]{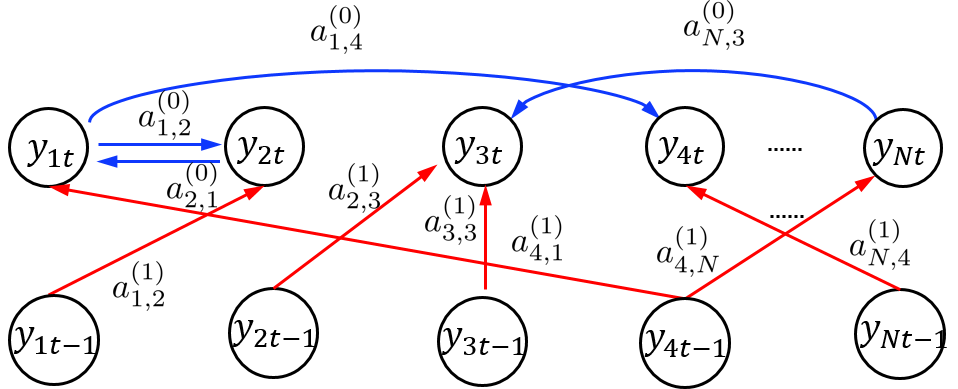}
\caption{The SVARM \eqref{eq:svarmvec}; 
instantaneous dependencies 
$\{\adjacencyzeromatentry\vertexvertexnot{\vertexind}{\vertexindp}\}$ (blue 
arrows) and time-lagged influences 
$\{\adjacencyonematentry\vertexvertexnot{\vertexind}{\vertexindp}\}$ (red 
arrows). }\label{fig:svarm}
\end{figure}
In order to unveil the hidden causal network topology,  SVARMs 
postulate that each $\signalfun\vertimenot{\vertexind}{\timeind}$
can be represented as a linear combination of instantaneous measurements at 
other nodes 
$\{\signalfun\vertimenot{\vertexindp}{\timeind}\}_{\vertexindp\ne\vertexind}$, 
and their time-lagged versions   
$\{\signalfun\vertimenot{\vertexindp}{\timeind-1}\}_{
\vertexindp=1}^\vertexnum$~\cite{chen2011vector}.
Specifically, the following  instantaneous plus  time-lagged model is advocated
\begin{align}
\label{eq:svarmscalar}
\signalfun\vertimenot{\vertexind}{\timeind}=
\sum_{\vertexindp\ne\vertexind}
\adjacencyzeromatentry\vertexvertexnot{\vertexind}{\vertexindp}
\signalfun\vertimenot{\vertexindp}{\timeind}
+	
\sum_{\vertexindp=1}^\vertexnum\adjacencyonematentry\vertexvertexnot{\vertexind}
{\vertexindp}\signalfun\vertimenot{\vertexindp}{(\timeind-1)}+
\semnoisefun\vertimenot{\vertexind}{\timeind}
\end{align}
where $	\adjacencyzeromatentry\vertexvertexnot{\vertexind}{\vertexindp}$ 
captures the instantaneous 
causal influence of node $\vertexind$ upon node $\vertexindp$, 
$\adjacencyonematentry\vertexvertexnot{\vertexind}{\vertexindp}$ encodes the 
time-lagged causal influence between them, and 
$\semnoisefun\vertimenot{\vertexind}{\timeind}$ accounts for unmodeled 
dynamics. By defining 
$\signalvec\layernot{\timeind}:=[\signalfun\vertlayernot{1}{\timeind},
\ldots,\signalfun\vertlayernot{\vertexnum}{\timeind}]\transpose$, 
$\semnoisevec\layernot{\timeind}:=[\semnoisefun\vertlayernot{1}{\timeind},
\ldots,\semnoisefun\vertlayernot{\vertexnum}{\timeind}]\transpose$, and the 
$\vertexnum\times \vertexnum$ matrices $\adjacencyzeromat$, and 
$\adjacencyonemat$ with entries 
$\{\adjacencyzeromatentry\vertexvertexnot{\vertexind}{\vertexindp}\}$, and 
$\{\adjacencyonematentry\vertexvertexnot{\vertexind}{\vertexindp}\}$ 
respectively,  the matrix-vector form of \eqref{eq:svarmscalar} becomes
\begin{align}
\label{eq:svarmvec}
\signalvec\timenot{\timeind}
=\adjacencyzeromat\signalvec\timenot{\timeind}
+\adjacencyonemat\signalvec\timenot{\timeind-1}
+\semnoisevec\timenot{\timeind},~\timeind=1,\ldots,\timenum
\end{align}
with $\signalvec\timenot{0}=\zerostatevec+\semnoisevec\timenot{0}$, and $\zerostatevec$ considered known. The SVARM in \eqref{eq:svarmvec} is a better fit for time-series over graphs compared to the SEM in \eqref{eq:semvec}, because it further accounts for temporal dynamics of $\signalvec\timenot{\timeind}$ through the 	time-lagged influence term $\adjacencyonemat\signalvec\timenot{\timeind-1}$. For this reason, SVARMs will be employed for  dynamic setups, such as modeling ECoG time series in brain networks, and predicting Internet router delays. The SVARM is depicted in Fig.~\ref{fig:svarm}.

\subsection{Problem statement}

Application-specific  constraints  allow only for a limited number of samples 
across nodes per slot $\timeind$. Suppose that ${\samplenum\layernot{\layeridex}}$ noisy samples of the $\layeridex$-th observation vector
\begin{align}
\label{eq:obsscalar}
\observationfun\vertlayernot{\sampleind}{\layeridex}= 
\signalfun\vertlayernot{\vertexind_\sampleind}{\layeridex} 
+\observationnoisefun\vertlayernot{\sampleind}{\layeridex} 
,~ \sampleind=1,\ldots, \samplenum\layernot{\layeridex}
\end{align}
are available,
where $\sampleset\layernot{\layeridex}:= 
\{\vertexind_1,\ldots,\vertexind_{\samplenum\layernot{\layeridex}}\}$ contains 
the indices  
$1\le\vertexind_1\le\ldots\le\vertexind_{\samplenum\layernot{\layeridex}}\le\vertexnum$
of the 
sampled vertices, and
$\observationnoisefun\vertlayernot{\sampleind}{\layeridex}$ models the 
observation error. With
$\observationvec 
\layernot{\layeridex}:=[\observationfun\vertlayernot{1}{\layeridex},\ldots, 
\observationfun\vertlayernot{\samplenum\layernot{\layeridex}}{\layeridex}]\transpose$,
and 
$\observationnoisevec\layernot{\layeridex}:= 
[\observationnoisefun\vertlayernot{1}{\layeridex},\ldots,
\observationnoisefun\vertlayernot{\samplenum\layernot{\layeridex}}{\layeridex}]\transpose$,
the observation model is
\begin{align}
\label{eq:obsvec}
\observationvec\layernot{\timeind}=\samplemat\layernot{\timeind} 
\signalvec\layernot{\timeind}+\observationnoisevec
\layernot{\timeind},~\timeind=1,\ldots,\timenum
\end{align}
where $\samplemat\layernot{\layeridex}$ is an 
$\samplenum\layernot{\layeridex}\times\vertexnum$ 
matrix with entries 
$\{(\sampleind,\vertexind_\sampleind)\}_{
\sampleind=1}^{\samplenum\layernot{\layeridex}}$,
set to 
one, and the rest set to zero.

The broad goal of this paper is the \emph{joint 
inference} of the hidden network topology and signals over graphs (JISG) from 
partial observations of the latter. Given the observations $\{\observationvec\layernot{\layeridex} \}_{\layeridex=1}^\layernum$ collected in accordance to the sampling matrices $\{\samplemat\layernot{\layeridex}\}_{\layeridex=1}^\layernum$,  one aims at finding the underlying topologies, $\adjacencymat$ for the SEM, or $\adjacencyzeromat$ and $ \adjacencyonemat$ for the SVARM, as well as reconstructing the graph process at all nodes $\{\signalvec\layernot{\layeridex}\}_{\layeridex=1}^\layernum$.
The complexity of the estimators should preferably 
scale linearly in $\timenum$. 
As estimating the topology and 
$\{\signalvec\layernot{\layeridex}\}_{\layeridex=1}^\layernum$ relies on
partial observations, this is a semi-blind inference task.

\section{\!Jointly inferring topology and signals}
\label{sec:jointinferalgo}
\cmt{optimization problem}

\cmt{Joint opt}
Given $\{\observationvec\layernot{\layeridex}\}_{\layeridex=1}^\layernum$ in 
\eqref{eq:obsvec}, this section develops  a novel 
approach to infer 
$\adjacencymat$, and 
$\{\signalvec\layernot\layeridex\}_{\layeridex=1}^\layernum$. To 
this end, we advocate the following regularized  
least-squares (LS) optimization problem
\begin{align}
\label{eq:topsiginfer}
\underset{\adjacencymat\in\adjacencyset,\{\signalvec\layernot\layeridex\}_{\layeridex=1}
^\layernum} 
{\min}&{\lossfunction(\adjacencymat,\{\signalvec\layernot\layeridex\}_{\layeridex=1}
^\layernum ):=} 
\sum_{\layeridex=1}^{\layernum}\|\signalvec\layernot{\layeridex}- 
\adjacencymat
\signalvec\layernot{\layeridex}\|_2^2\\&\hspace{-1cm}+
\sum_{\layeridex=1}^{\layernum}\frac{\regpar} 
{\samplenum\layernot{\layeridex}}\|\observationvec\layernot{\layeridex}- 
\samplemat\layernot{\layeridex}\signalvec\layernot{\layeridex}\|_2^2
+\regparadjone\|\adjacencymat\|_1+\regparadjtwo 
\|\adjacencymat\|_F^2\nonumber
\end{align}
where $\regpar\ge0$ tunes the relative importance of the fitting term; 
$\regparadjone\ge0$, $\regparadjtwo\ge0$ control the effect of the 
$\ell_1$-norm and 
the Frobenius-norm, respectively, and 
$\adjacencyset:=\{\adjacencymat:
\adjacencymat\in\mathbb{R}^{\vertexnum\times\vertexnum}, 
\{\adjacencymatentry\vertexvertexnot 
{\vertexind}{\vertexind}=0\}_{\vertexind=1}^\vertexnum\}$. 
\cmt{elastic net}The weighted sum of $\|\cdot\|_1$ and 
$\|\cdot\|_F$ is 
the so-termed elastic net penalty, which promotes connections between highly 
correlated nodal 
measurements. The elastic net  targets the ``sweet spot'' 
between 
the $\ell_1$ regularizer that effects sparsity, and the 
$\|\cdot\|_F$ regularizer, which advocates fully connected networks~\cite{zou2005regularization}.

Even though \eqref{eq:topsiginfer} is nonconvex in both $\adjacencymat$ and 
$\signalvec\layernot{\layeridex}$ due to the  bilinear product 
$\adjacencymat\signalvec\layernot{\layeridex}$, 
it is convex with respect to (w.r.t.) each  block variable separately.
This  motivates an iterative block coordinate descent (BCD) algorithm that alternates between estimating $\{\signalvec\layernot\layeridex\}_{\layeridex=1}^\layernum$ and  
$\adjacencymat$. 

Given $\adjacencyestmat$, the estimates  
$\{\signalestvec\layernot\layeridex\}_{\layeridex=1}^\layernum$ are found 
by solving the  quadratic problem
\begin{align}
\label{eq:siginfer}
\underset{\{\signalvec\layernot\layeridex\}_{\layeridex=1}^\layernum}
{\min}\sum_{\layeridex=1}^{\layernum}\|\signalvec\layernot{\layeridex}- 
\adjacencyestmat
\signalvec\layernot{\layeridex}\|_2^2+
\sum_{\layeridex=1}^{\layernum}\frac{\regpar} 
{\samplenum\layernot{\layeridex}}\|\observationvec\layernot{\layeridex}- 
\samplemat\layernot{\layeridex}\signalvec\layernot{\layeridex}\|_2^2
\end{align}
where the regularization terms in~\eqref{eq:topsiginfer} do not appear. 
Clearly,~\eqref{eq:siginfer} conveniently decouples across $\layeridex$ as
\begin{align}
\label{eq:siginferone}
\underset{\signalvec\layernot\layeridex}
{\min}~~\singinfeonefun(\signalvec
\layernot\layeridex):=\tfrac{\samplenum\layernot{\layeridex}} 
{\regpar}\|(\identitymat_\vertexnum-\adjacencyestmat)
\signalvec\layernot{\layeridex}\|_2^2+\|\observationvec\layernot{\layeridex}- 
\samplemat\layernot{\layeridex}\signalvec\layernot{\layeridex}\|_2^2.
\end{align}
The first quadratic in~\eqref{eq:siginferone} can be written as
$
\|\big(\identitymat_\vertexnum-\adjacencyestmat)
\signalvec\layernot{\layeridex}\|_2^2=\sum_{\vertexind=1}^{\vertexnum}(	
\signalfun\vertlayernot{\vertexind}{\layeridex}- 
\sum_{\vertexindp\in\mathcal{N}_\vertexind}\adjacencyestmatentry 
\vertexvertexnot{\vertexind}{\vertexindp} 
\signalfun\vertlayernot{\vertexindp}{\layeridex})^2
$,
and it can be viewed as a regularizer for 
$\signalvec\layernot\layeridex$, promoting graph signals with similar values at 
neighboring 
nodes. 
Notice that~\eqref{eq:siginferone} may not be strongly convex, since  
$\identitymat_\vertexnum-\adjacencyestmat$ could be rank deficient.
\cmt{gradient}Nonetheless, since 
$\singinfeonefun(\cdot)$ is 
smooth, \eqref{eq:siginferone} can be 
readily solved  via gradient 
descent  (GD) iterations
\begin{align}
\label{eq:gd}
\signalvec\layernot{\layeridex}\gditernot{\gditerind}=	
\signalvec\layernot{\layeridex}\gditernot{\gditerind-1}-\gdstep	
\nabla\singinfeonefun(\signalvec\layernot{\layeridex}\gditernot{\gditerind})
\end{align}
where
$\nabla\singinfeonefun(\signalvec\layernot\layeridex):= 
\tfrac{\samplenum\layernot{\layeridex}}
{\regpar}\big((\identitymat_\vertexnum
-\adjacencyestmat)\transpose\! 
(\identitymat_\vertexnum-\adjacencyestmat)+$
$\!
\samplemat\layernot{\layeridex}\!\transpose\samplemat\layernot{\layeridex}\big) 
\signalvec\layernot\layeridex$ 
$-\samplemat\layernot{\layeridex}\observationvec\layernot{\layeridex}$, and 
$\gdstep>0$ 
is the stepsize chosen e.g. by the Armijo 
rule~\cite{bertsekas1999nonlinear}.  The 
computational cost of
\eqref{eq:gd}  is dominated by the matrix-vector multiplication  of 
$\identitymat_\vertexnum-\adjacencyestmat\iteratenot{\iterateindex}$ with 
$\signalvec\layernot{\layeridex}$, which is proportional to $\mathcal{O}(k_{\rm 
nnz})$, where 
$k_{\rm nnz}$ denotes the 
number of 
non-zero entries of $\adjacencyestmat$. Moreover, the learned  
$\adjacencyestmat$ is expected to be sparse due to the $\ell_1$ 
regularizer in~\eqref{eq:topsiginfer}, which renders first-order 
iterations~\eqref{eq:gd} 
computationally attractive, 
especially when graphs are 
large.
The GD iterations~\eqref{eq:gd} are run in parallel across $\layeridex$ 
until convergence to a minimizer of~\eqref{eq:siginferone}.

\cmt{topology identification algo}
On the other hand, with
$\{\signalestvec\layernot\layeridex\}_{\layeridex=1}^\layernum$  available,  
$\adjacencyestmat$ is found via
\begin{align}
\label{eq:topinfer}
\underset{\adjacencymat\in\adjacencyset} 
{\min}~~
\sum_{\layeridex=1}^{\layernum}\|\signalestvec\layernot{\layeridex}- 
\adjacencymat
\signalestvec\layernot{\layeridex}\|_2^2 
+\regparadjone\|\adjacencymat\|_1+\regparadjtwo 
\|\adjacencymat\|_F^2
\end{align}
where the LS observation error in~\eqref{eq:topsiginfer} has been omitted. Note 
that~\eqref{eq:topinfer} is strongly convex, and as such it admits a unique 
minimizer. Hence, we adopt the  alternating methods of multipliers (ADMM), 
which guarantees convergence to the global minimum in a finite number of 
iterations; see e.g.~\cite{giannakis2016decentralized}. The derivation of the 
algorithm is omitted due to lack of space; instead  the detailed derivation of 
an ADMM solver for a more general setting will be presented in  Sec. 
\ref{sec:svaralgo}.

The BCD solver for JISG  is summarized as Algorithm~\ref{algo:JISG}. JISG 
converges at least to a stationary point of~\eqref{eq:topsiginfer}, as asserted 
by the ensuing proposition. 
\begin{algorithm}[t]                
\caption{Joint Infer. of Signals and Graphs (JISG)}
\label{algo:JISG}    
\vspace{0.2cm}
\begin{minipage}{40cm}
\indent\textbf{Input:} Observations 
$\{\observationvec\layernot{\layeridex}\}_{\layeridex=1}^\layernum$; sampling 
matrices
$\{\samplemat\layernot{\layeridex}\}_{\layeridex=1}^\layernum$;
\\ \hspace*{1.1cm} and regularization parameters 
$\{\regpar,\regparadjone,\regparadjtwo\}$
\vspace{0.2cm}
\begin{algorithmic}[1]
\STATE\emph{Intialize:} 
$\signalestvec\layernot{\layeridex}\iteratenot{0}= 
\samplemat\layernot{\layeridex}\transpose 
\observationvec\layernot{\layeridex},~\layeridex=1,\ldots,\layernum$
\vspace{0.1cm}
\STATE 	\textbf{while} {iterates not converge}
%
%
%
\textbf{do}\hspace{0.1cm}\\		
\STATE \hspace{1cm}Estimate $\adjacencyestmat\iteratenot{\iterateindex}$ 
from~\eqref{eq:topinfer} using ADMM.
\STATE \hspace{1cm}Update 
$\{\signalestvec\layernot\layeridex\iteratenot{\iterateindex}\}_{\layeridex=1}	
^\layernum $ using~\eqref{eq:siginferone} and \eqref{eq:gd}.
\STATE \hspace{1cm}{$\iterateindex=\iterateindex+1$}
\STATE\textbf{end while}
\end{algorithmic}
\vspace{0.2cm}
\indent\textbf{Output:} {$\{\signalestvec\layernot\layeridex\}_ 
{\layeridex=1}^\layernum,\adjacencyestmat$}
.
\end{minipage}
\end{algorithm}

\begin{mypropositionhere}The sequence  of iterates 
$\big\{\{\signalestvec\layernot\layeridex\iteratenot{\iterateindex}\}_	
{\layeridex=1}^\layernum,\adjacencyestmat\iteratenot{\iterateindex}\big\}_\iterateindex$,
	resulting from obtaining the global minimizers of~\eqref{eq:siginfer}  
and~\eqref{eq:topinfer}, is bounded and converges monotonically to a stationary 
point of~\eqref{eq:topsiginfer}.
\end{mypropositionhere}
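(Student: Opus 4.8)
The plan is to invoke the standard convergence theory for two-block coordinate descent (Gauss--Seidel / alternating minimization) applied to the objective $\lossfunction$ in \eqref{eq:topsiginfer}. The key structural facts, all of which follow from the discussion preceding the proposition, are: (i) $\lossfunction$ is continuously differentiable on $\rfield^{\vertexnum\times\vertexnum}\times\rfield^{\vertexnum\timenum}$; (ii) restricted to either block with the other fixed, it is convex, so the subproblems \eqref{eq:siginfer} and \eqref{eq:topinfer} are solved exactly at their global minimizers (the $\signalvec$-subproblem decouples across $\layeridex$ into the smooth quadratics \eqref{eq:siginferone}, and the $\adjacencymat$-subproblem is strongly convex hence has a \emph{unique} minimizer, which removes the usual ambiguity that can break convergence of BCD with more than two blocks); and (iii) $\lossfunction$ is bounded below by $0$ and is coercive in $\adjacencymat$ (through $\regparadjtwo\|\adjacencymat\|_F^2$) and, once $\adjacencymat$ lies in a bounded set with $\identitymat_\vertexnum-\adjacencymat$ invertible, coercive in each $\signalvec\layernot\layeridex$ (through the data term $\tfrac{\regpar}{\samplenum\layernot\layeridex}\|\observationvec\layernot\layeridex-\samplemat\layernot\layeridex\signalvec\layernot\layeridex\|_2^2$ together with the SEM-fit term).

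First I would establish \emph{monotonic decrease}: since each half-step replaces a block by an exact minimizer of $\lossfunction$ over that block, $\lossfunction(\adjacencyestmat\iteratenot{\iterateindex+1},\{\signalestvec\layernot\layeridex\iteratenot{\iterateindex+1}\}) \le \lossfunction(\adjacencyestmat\iteratenot{\iterateindex+1},\{\signalestvec\layernot\layeridex\iteratenot{\iterateindex}\}) \le \lossfunction(\adjacencyestmat\iteratenot{\iterateindex},\{\signalestvec\layernot\layeridex\iteratenot{\iterateindex}\})$. Hence $\{\lossfunction\iteratenot{\iterateindex}\}$ is nonincreasing and, being bounded below by $0$, converges. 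Next I would argue \emph{boundedness} of the iterate sequence: the sublevel set $\{\lossfunction \le \lossfunction\iteratenot{0}\}$ is bounded. Indeed the $\regparadjtwo\|\adjacencymat\|_F^2$ term forces $\adjacencyestmat\iteratenot{\iterateindex}$ into a compact ball $\mathcal{B}$; on $\mathcal{B}$ one shows (by continuity and the fact that along the trajectory $\identitymat_\vertexnum-\adjacencyestmat$ need not be invertible, so a little care is needed here) that the remaining terms control $\|\signalestvec\layernot\layeridex\iteratenot{\iterateindex}\|_2$ --- concretely, the $\signalvec$-update \eqref{eq:gd} is initialized warm and each GD run only decreases $\singinfeonefun$, so $\|\observationvec\layernot\layeridex-\samplemat\layernot\layeridex\signalestvec\layernot\layeridex\iteratenot{\iterateindex}\|_2$ stays bounded, and combined with the SEM-fit penalty this bounds the full vector. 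By Bolzano--Weierstrass, the bounded sequence has limit points.

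Finally I would show every limit point is a \emph{stationary point} of \eqref{eq:topsiginfer}. Passing to a convergent subsequence and using continuity of $\nabla\lossfunction$, one checks the two partial first-order optimality conditions pass to the limit: the $\signalvec$-block limit satisfies $\nabla_{\signalvec}\lossfunction=0$ (an unconstrained smooth stationarity condition), and the $\adjacencymat$-block limit satisfies the variational inequality for the constrained, nonsmooth (but convex-in-$\adjacencymat$) subproblem over $\adjacencyset$ with the elastic-net penalty. Because the off-diagonal-free constraint set $\adjacencyset$ is a subspace and the nonsmoothness ($\ell_1$) is \emph{separable} and confined to the single block $\adjacencymat$, the classical result of Tseng on block coordinate descent (each block closed, exact minimization, at most one nonsmooth separable block) applies and guarantees that the accumulation point is a coordinatewise minimizer, hence a stationary (Clarke / directional) point of the joint problem; see \cite{bertsekas1999nonlinear}. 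Combining monotone convergence of $\{\lossfunction\iteratenot\iterateindex\}$ with the fact that all limit points are stationary and share the same objective value yields the claim. The main obstacle I anticipate is the boundedness argument: because $\identitymat_\vertexnum-\adjacencyestmat\iteratenot{\iterateindex}$ can be singular, coercivity in $\signalvec\layernot\layeridex$ is not automatic and must be obtained from the data-fidelity term on the sampled entries plus the warm-start/monotonicity of the inner GD loop; everything else is a routine application of the two-block alternating-minimization machinery.
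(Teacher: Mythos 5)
Your proposal follows essentially the same route as the paper: both invoke Tseng's block-coordinate-descent convergence theory, using that the objective is differentiable except for the separable $\ell_1$ term (so every coordinatewise minimum is a stationary point) and that each block subproblem is convex and minimized exactly, which yields the monotone decrease and convergence claims. The coercivity subtlety you flag --- that $\identitymat_\vertexnum-\adjacencyestmat$ may be singular, so boundedness of the signal iterates is not automatic from the sublevel set --- is a genuine gap, but it is one the paper's own proof also leaves unaddressed, simply asserting boundedness before appealing to Tseng's Theorem 5.1.
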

\begin{proof}
The basic convergence results of  BCD  have been established 
in~\cite{tseng2001convergence}. First, notice that all the terms 
in~\eqref{eq:topsiginfer} are differentiable over their open domain except the 
non-differentiable $\ell_1$ norm, which is  however separable. 	These 
observations establish, based on~\cite[Lemma 3.1]{tseng2001convergence}, 
that 	
$\lossfunction(\adjacencymat,\{\signalvec\layernot\layeridex\}_{\layeridex=1}
^\layernum )$ is regular at each coordinatewise minimum point 	
$\adjacencyestmat^*,\{\signalestvec\layernot\layeridex^*\}_ 	
{\layeridex=1}^\layernum$, and therefore every such a point is a stationary 
point 
of~\eqref{eq:topsiginfer}. 	Moreover, 
$\lossfunction(\adjacencymat,\{\signalvec\layernot\layeridex\}_{\layeridex=1}
^\layernum )$ is continuous and  convex per variable. 
Hence, by appealing to~\cite[Theorem 5.1]{tseng2001convergence}, the sequence 
of 
iterates 
generated by JISG converges monotonically to  a coordinatewise minimum 
point of $\lossfunction$, and consequently to a stationary point 
of~\eqref{eq:topsiginfer}. 
\end{proof} 

\noindent A few remarks are now in order.
\begin{myremarkhere}
A popular alternative to the elastic net regularizer is the nuclear norm 
$\regfun(\adjacencymat)=\|\adjacencymat\|_\ast$ that promotes low rank 
of the learned adjacency matrix -  a well-motivated attribute when the graph is 
expected to  exhibit  clustered structure~\cite{chen2014clustering}.
\end{myremarkhere}
\begin{myremarkhere}
Oftentimes, prior  information  about $\graph$ may be available, e.g. the 
support of 
$\adjacencymat$; nonnegative edge weights 
$\adjacencymatentry\vertexvertexnot{\vertexind}{\vertexindp}\ge0, \forall 
\vertexind,\vertexindp$; or, the  value of
$\adjacencymatentry\vertexvertexnot{\vertexind}{\vertexindp}$ for some 
$\vertexind,\vertexindp$. Such prior information  can be easily incorporated 
in~\eqref{eq:topsiginfer} by adjusting  $\adjacencyset$, and the ADMM solver 
accordingly.
\end{myremarkhere}
\begin{myremarkhere}
The estimator in~\eqref{eq:siginfer} that relies on SEMs 
is capable of estimating  
functions over \emph{directed} graphs as well as undirected ones, while
kernel-based approaches~\cite{smola2003kernels} and estimators that rely 
on the graph Fourier transform~\cite{shuman2013emerging} are usually confined 
to undirected 
graphs.
\end{myremarkhere}

\begin{myremarkhere}
In real-world networks, 	sets of nodes may depend upon each other via 
multiple types of  relationships, which ordinary networks cannot	
capture~\cite{kivela2014multilayer}. Consequently, generalizing the	traditional 
\emph{single-layer} to \emph{multilayer} networks that organize the nodes into 
different groups, called \emph{layers}, is well motivated.  Such layer 
structure can be  incorporated in~\eqref{eq:topsiginfer} via appropriate 
regularization; see e.g. \cite{ioannidis2018multilay}. Thus, the JISG estimator 
can also accommodate multilayer graphs. 
\end{myremarkhere}

\section{Jointly  infer graphs and processes over time}
\label{sec:svar}
Real-world networks often involve processes that vary over time, with 
 dynamics not captured by SEMs. This section considers an 
alternative based on  SVARMs that allows for joint inference of dynamic 
network processes and graphs.

\subsection{Batch Solver for JISG over time}
\label{sec:svaralgo}Given $\{	\observationvec\timenot{\timeind}, 
\samplemat\timenot{\timeind}\}_{t=1}^T$, this section develops  an efficient 
approach to infer $\adjacencyzeromat$, $\adjacencyonemat$, and 
$\{\signalvec\timenot{\timeind} \}_{\timeind=0}^{\timenum}$. Clearly, to cope 
with the undetermined system of equations  \eqref{eq:svarmvec} and 
\eqref{eq:obsvec},  one has to exploit the structure in $\adjacencyzeromat$ and
$\adjacencyonemat$. This prompts the following regularized  
LS  objective
\begin{align}
\label{eq:topsiginfersvarm}
\hspace{-1cm}\underset{\adjacencymat\lagnot{0}\in\adjacencyset,\atop
\adjacencymat\lagnot{1}, \{\signalvec\timenot{\timeind}
\}_{\timeind=0}^{\timenum}
}{\minimize}&
\sum_{\timeind=1}^{\timenum}
\|\signalvec\timenot{\timeind}
-\adjacencymat\lagnot{0}\signalvec\timenot{\timeind}
-\adjacencymat\lagnot{1}\signalvec\timenot{\timeind-1}
\|_2^2\\\hspace{-2cm}+&\|\signalvec\timenot{0}-\zerostatevec\|_2^2+
\sum_{\timeind=1}^{\timenum}
\frac{\regpar} 
{\samplenum\timenot{\timeind}}\|\observationvec 
\timenot{\timeind}- 
\samplemat\timenot{\timeind}\signalvec 
\timenot{\timeind}\|_2^2\nonumber\\
+&\regfunelnet(\adjacencymat\lagnot{0})
+\regfunelnet(\adjacencymat\lagnot{1})\nonumber
\end{align}
where   $\regpar>0$ is a regularization scalar weighting the fit to the 
observations,  and  $\regfunelnet(\adjacencymat
)\define2\regparadjone\|\adjacencymat\|_1+{\regparadjtwo}
\|\adjacencymat\|_F^2\nonumber$ is the elastic net regularizer for the 
connectivity matrices. 
The first sum accounts for the LS fitting error of the SVARM, and the second LS 
cost accounts for the initial conditions. The third term sums the measurement 
error over $\timeind$. Finally,  the elastic net penalty terms 
$\regfunelnet(\adjacencymat\lagnot{0})$, and 
$\regfunelnet(\adjacencymat\lagnot{0})$ favor connections among highly 
correlated nodes; see also discussion after \eqref{eq:topsiginfer}.

The optimization problem in \eqref{eq:topsiginfersvarm} is nonconvex due to the 
bilinear terms $\adjacencymat\lagnot{0}\signalvec\timenot{\timeind}$, and 
$\adjacencymat\lagnot{1}\signalvec\timenot{\timeind-1}$; nevertheless, it is 
convex w.r.t. each of the variables separately. Next, an efficient  algorithm 
based on  BCD is put forth that provably attains a 
stationary point of \eqref{eq:topsiginfersvarm}. With 
$\adjacencyestmat\lagnot{0}$, and $\adjacencyestmat\lagnot{1}$ available, the 
following objective  yields estimates 
\begin{align}
\label{eq:siginfersvarm}
\{\signalestvec\timegiventimenot{\timeind}{\timenum}\}_{\timeind=0}^{\timenum}\define
&\underset{\{\signalvec\timenot{\timeind}
\}_{\timeind=0}^{\timenum}
}{\arg\min}
\sum_{\timeind=1}^{\timenum}
\|\signalvec\timenot{\timeind}
-\adjacencyestmat\lagnot{0}\signalvec\timenot{\timeind}
-\adjacencyestmat\lagnot{1}\signalvec\timenot{\timeind-1}
\|_2^2\nonumber\\+&\|\signalvec\timenot{0}-\zerostatevec\|_2^2+
\sum_{\timeind=1}^{\timenum}
\frac{\regpar} 
{\samplenum\timenot{\timeind}}\|\observationvec 
\timenot{\timeind}- 
\samplemat\timenot{\timeind}\signalvec 
\timenot{\timeind}\|_2^2
\end{align}
where $\signalestvec\timegiventimenot{\timeind}{\timenum}$ denotes  the 
estimate of 
$\signalvec\timenot{\timeind}$ given 
$\{\observationvec\timenot{\tau}\}_{\tau=1}^{\timenum}$.   Different from 
\eqref{eq:siginfer},  the time-lagged dependencies $\adjacencymat\lagnot{1} 
\signalvec\timenot{\timeind-1}$ couple the objective in 
\eqref{eq:siginfersvarm} across $\timeind$. Upon defining 
$\auxadj:=\identitymat-\adjacencymat\lagnot{0}$ that is assumed invertible, 
$\transmat:={\auxadj}\inv\adjacencymat\lagnot{1}$, and
$\transcov:=(\auxadj\transpose\auxadj)\inv$, we can express 	
\eqref{eq:siginfersvarm}  equivalently as
\begin{align}
\label{eq:siginfersvarmreform}
\{\signalestvec\timegiventimenot{\timeind}{\timenum}\}_{\timeind=0}^{\timenum} 
\define&
\underset{\{\signalvec\timenot{\timeind}
\}_{\timeind=0}^{\timenum}
}{\arg\min}
\sum_{\timeind=1}^{\timenum}
\|\signalvec\timenot{\timeind} 
-\transmat\signalvec\timenot{\timeind-1}
\|_{\transcov}^2+\nonumber\\+
&\|\signalvec\timenot{0}-\zerostatevec\|_2^2+\sum_{\timeind=1}^{\timenum}
\frac{\regpar} 
{\samplenum\timenot{\timeind}}\|\observationvec 
\timenot{\timeind}- 
\samplemat\timenot{\timeind}\signalvec 
\timenot{\timeind}\|_2^2.
\end{align}
The minimizer of \eqref{eq:siginfersvarmreform} can be attained by
\begin{align}
\hat{\signalfulvec}:=\underset
{\signalfulvec} {\text{argmin}} 
\Bigg\| 
\overset
{\observationfulvec
}{\overbrace{\begin{bmatrix}
\zerostatevec\\\mathbf{0}_\vertexnum\\\vdots\\
\mathbf{0}_\vertexnum\\
\hline
\observationvec\timenot{1}\\
\vdots\\
\observationvec\timenot{\timenum}
\end{bmatrix}}}-
\overset
{\transfulmat
}{\overbrace{
\begin{bmatrix}
\identitymat_N\\
-\transmat&\identitymat_N\\
&\ddots&\ddots\\
&&-\transmat&\identitymat_N\\
\hline
\mathbf{0}&\samplemat\timenot{1}\\
\vdots&&\ddots\\
\mathbf{0}&&&\samplemat\timenot{\timenum}
\end{bmatrix}}}
\overset
{\signalfulvec
}{\overbrace{\begin{bmatrix}
\signalvec\timenot{0}\\
\signalvec\timenot{1}\\
\vdots\\
\signalvec\timenot{\timenum}
\end{bmatrix}}}
\Bigg\|_{\corfulmat}^2
\label{eq:extls}
\end{align}
where the square matrix $\corfulmat$ weighting the norm in \eqref{eq:extls}
has block entries 
$[\corfulmat]_{1,1}=\identitymat_\vertexnum$,
$\{[\corfulmat]_{i,i}=\transcov\}_{i=2}^{T+1}$, 
$\{[\corfulmat]_{i,i}=(\samplenum\timenot{i-\timenum-1}/\regpar)
\identitymat_{\samplenum\timenot{i-T-1}}
\}_{i=T+2}^{2T+1}$. The minimizer of \eqref{eq:extls} admits a closed-form 
solution as 
$\hat{\signalfulvec}=(\transfulmat\transpose\corfulmat\inv\transfulmat)\inv
\transfulmat\transpose\corfulmat\inv\observationfulvec$.
Unfortunately, 
this direct 
approach incurs computational complexity $\mathcal{O}(\vertexnum^3\timenum^3)$, 
which  does not scale favorably. Scalability in $\vertexnum$ can be aided by distributed solvers, which are possible using consensus-based ADMM iterations; see e.g.,~\cite{giannakis2016decentralized}. With regards to scalability across time,  the following proposition establishes that an 
iterative algorithm attains 
$	\{\signalestvec\timegiventimenot{\timeind}{\timenum}\}_{\timeind=0}^{\timenum}$ with 
complexity that is linear in $\timenum$. 

Since \eqref{eq:siginfersvarmreform} is identical to the deterministic 
formulation of the Rauch-Tung-Striebel (RTS) smoother for a state-space model 
with state noise covariance $\transcov$ and measurement noise covariance 
$\{\samplenum\timenot{\timeind}/\regpar
\identitymat_{\samplenum\timenot{\timeind}}\}_{\timeind}$, we deduce that the 
RTS algorithm,  see e.g. \cite{anderson1979optimal},\cite{rauch1965dynamic}, 
applies readily to obtain sequentially the structured per slot $\timeind$ 
component $	\{\signalestvec\timegiventimenot 
{\timeind}{\timenum}\}_{\timeind=0}^{\timenum}$. Summing up, we have 
established the following result. 

\begin{mypropositionhere}\thlabel{thm:rts}
The minimizers of  \eqref{eq:siginfersvarm}  $	
\{\signalestvec\timegiventimenot{\timeind}{\timenum}\}_{\timeind=0}^{\timenum}$ 
are given iteratively by the RTS
 smoother summarized as Algorithm~\ref{algo:GRTS}.
\end{mypropositionhere}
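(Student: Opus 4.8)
The plan is to recognize \eqref{eq:siginfersvarm}, equivalently its reformulation \eqref{eq:siginfersvarmreform}, as precisely the batch least-squares (MAP) problem attached to a linear Gaussian state-space model, whose unique minimizer is delivered exactly by a forward Kalman filtering pass followed by the backward RTS smoothing recursions. First I would make the correspondence explicit. With $\auxadj:=\identitymat_\vertexnum-\adjacencyestmat\lagnot{0}$ (invertible by the standing assumption), $\transmat:=\auxadj\inv\adjacencyestmat\lagnot{1}$ and $\transcov:=(\auxadj\transpose\auxadj)\inv$, consider the model with state recursion $\signalvec\timenot{\timeind}=\transmat\signalvec\timenot{\timeind-1}+\mathbf w\timenot{\timeind}$, $\mathbf w\timenot{\timeind}\sim\mathcal N(\mathbf 0,\transcov)$, initialization $\signalvec\timenot{0}\sim\mathcal N(\zerostatevec,\identitymat_\vertexnum)$, and observation equation \eqref{eq:obsvec} with measurement-noise covariance $(\samplenum\timenot{\timeind}/\regpar)\identitymat_{\samplenum\timenot{\timeind}}$. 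The negative log-likelihood of this model, viewed as a function of the trajectory $\{\signalvec\timenot{\timeind}\}_{\timeind=0}^{\timenum}$, coincides term by term with the objective in \eqref{eq:siginfersvarmreform} up to an additive constant, so the two problems share the same set of minimizers.

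Second I would establish well-posedness, so that ``the'' minimizer is meaningful and it suffices to verify that the RTS output is a (hence the) stationary point. Stacking as in \eqref{eq:extls}, the lower block-triangular factor $\transfulmat$ has identity diagonal blocks, hence full column rank; the weighting matrix $\corfulmat$ is block-diagonal with the positive-definite blocks $\identitymat_\vertexnum$, $\transcov$, and $(\samplenum\timenot{\timeind}/\regpar)\identitymat$, hence positive definite; therefore $\transfulmat\transpose\corfulmat\inv\transfulmat\succ\mathbf 0$, the cost is strictly convex, and the minimizer $\hat{\signalfulvec}$ (equivalently $\{\signalestvec\timegiventimenot{\timeind}{\timenum}\}_{\timeind=0}^{\timenum}$) is unique.

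Third I would obtain the recursions by dynamic programming in $\timeind$. Minimizing the quadratic cost block by block from $\timeind=0$ forward, each step being a completion of the square in one $\signalvec\timenot{\timeind}$, produces the Kalman filtering quantities $\{\signalestvec\timegiventimenot{\timeind}{\timeind},\errorcovkf\timegiventimenot{\timeind}{\timeind}\}$; propagating the remaining minimization backward from $\timeind=\timenum$ yields the RTS corrections $\signalestvec\timegiventimenot{\timeind}{\timenum}=\signalestvec\timegiventimenot{\timeind}{\timeind}+\kalmansmoothgain\timenot{\timeind}(\signalestvec\timegiventimenot{\timeind+1}{\timenum}-\signalestvec\timegiventimenot{\timeind+1}{\timeind})$, which is exactly Algorithm~\ref{algo:GRTS}. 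The reduced row dimension of $\samplemat\timenot{\timeind}$ is handled routinely by the innovations-form update (matrix inversion lemma), keeping the per-slot cost $\mathcal O(\vertexnum^3)$ and the overall cost linear in $\timenum$. Alternatively, and more cleanly, I would simply invoke the classical equivalence between the deterministic RTS smoother and the batch MAP estimate for linear Gaussian models \cite{rauch1965dynamic,anderson1979optimal}, applied to the model identified above; the uniqueness from the previous step then forces the RTS output to equal $\{\signalestvec\timegiventimenot{\timeind}{\timenum}\}_{\timeind=0}^{\timenum}$.

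The main obstacle is bookkeeping rather than anything conceptual: one must carefully match the nonstandard ingredients of \eqref{eq:siginfersvarmreform} to the textbook smoother — the state-equation residual is weighted by $\transcov$ (not the identity), the measurement covariance varies with $\timeind$ through $\samplenum\timenot{\timeind}$, the measurement map $\samplemat\timenot{\timeind}$ is a selection matrix of varying row dimension, and the ``prior'' on $\signalvec\timenot{0}$ is the term $\|\signalvec\timenot{0}-\zerostatevec\|_2^2$ — so that the filtering and smoothing gains come out exactly as stated. Once this dictionary is fixed, exactness of RTS for the strictly convex quadratic completes the proof.
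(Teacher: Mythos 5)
Your proposal is correct and follows essentially the same route as the paper: rewrite \eqref{eq:siginfersvarm} as the weighted least-squares problem \eqref{eq:siginfersvarmreform} via $\auxadj=\identitymat_\vertexnum-\adjacencyestmat\lagnot{0}$, $\transmat=\auxadj\inv\adjacencyestmat\lagnot{1}$, $\transcov=(\auxadj\transpose\auxadj)\inv$, recognize it as the deterministic (batch MAP) formulation of a linear Gaussian state-space model with state-noise covariance $\transcov$, measurement-noise covariance $(\samplenum\timenot{\timeind}/\regpar)\identitymat_{\samplenum\timenot{\timeind}}$, and prior $\mathcal N(\zerostatevec,\identitymat_\vertexnum)$, and invoke the classical equivalence with the RTS forward-backward recursions. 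Your added verification of strict convexity and uniqueness via the stacked system \eqref{eq:extls} is a welcome refinement that the paper leaves implicit.
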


\begin{algorithm}[t]                
\caption{{RTS smoother}}
\label{algo:GRTS}    
\begin{minipage}{40cm}
\indent\textbf{Input.} $\transmat$; $\transcov$; 
$\signalestvec\timegiventimenot{0}{0}$; $
\coverrror\timegiventimenot{0}{0}$;	
$\{\observationvec\timenot{\timeind}\}_{\timeind=1}^{\timenum}$; 
$\{\samplemat\timenot{\timeind}\}_{\timeind=1}^{\timenum}$\vspace{0.1cm}
\\
\indent	\hspace{0.5cm} \textbf{for}~{$\timeind=1,2,\ldots,T$}{  }\textbf{do}
\\
KF1. \hspace{0.5cm} 
$\signalestvec\timegiventimenot{\timeind}{\timeind-1}=\transmat\signalestvec 
\timegiventimenot{\timeind-1}{\timeind-1}$\\
KF2. \hspace{0.5cm} $\coverrror\timegiventimenot{\timeind}{\timeind-1}= 
\transmat\coverrror\timegiventimenot{\timeind-1}{\timeind-1} 
\transmat\transpose + 
\transcov$\\
KF3. 	\hspace{0.5cm} 
$\kalmanfiltgain\timenot{\timeind}= 
\coverrror\timegiventimenot{\timeind}{\timeind-1} 
\samplemat\timenot{\timeind}\transpose\big((\samplenum\timenot{\timeind}/\regpar)
\identitymat_{\samplenum\timenot{\timeind}} + 
\samplemat\timenot{\timeind}\coverrror\timegiventimenot{\timeind}{\timeind-1} 
\samplemat\timenot{\timeind}\transpose\big)\inv$
\\
KF4. \hspace{0.5cm} 
$\coverrror\timegiventimenot{\timeind}{\timeind}=(\identitymat_\vertexnum 
-\kalmanfiltgain\timenot{\timeind}\samplemat\timenot{\timeind})
\coverrror\timegiventimenot{\timeind}{\timeind-1}$
\\
KF5. \hspace{0.5cm}  $\signalestvec\timegiventimenot{\timeind}{\timeind}= 
\signalestvec\timegiventimenot{\timeind}{\timeind-1} 
+\kalmanfiltgain\timenot{\timeind}
(\observationvec\timenot{\timeind}-\samplemat\timenot{\timeind}
\signalestvec\timegiventimenot{\timeind}{\timeind-1})$
\\
\indent	\hspace{0.5cm} \textbf{end for}
\\
\indent	\hspace{0.5cm} \textbf{for}~{$\timeind=T-1,T-2,\ldots,1$}{  }\textbf{do}
\\
KS1. \hspace{0.5cm}
$\kalmansmoothgain\timenot{\timeind}=\coverrror
\timegiventimenot{\timeind}{\timeind}\transmat\transpose\coverrror
\timegiventimenot{\timeind+1}{\timeind}\inv$ 
\\
KS2. \hspace{0.5cm}  $\signalestvec\timegiventimenot{\timeind}{\timenum}= 
\signalestvec\timegiventimenot{\timeind}{\timeind} 
+\kalmansmoothgain\timenot{\timeind}
(\signalestvec\timegiventimenot{\timeind+1}{\timenum}-
\signalestvec\timegiventimenot{\timeind+1}{\timeind})$
\\
\indent	\hspace{0.5cm} \textbf{end for}\\
\indent\textbf{Output.} 
$\{\signalestvec\timegiventimenot{\timeind}{\timenum}\}_{\timeind=0}^{\timenum} 
$
\end{minipage}
\end{algorithm}

Algorithm \ref{algo:GRTS} is a forward-backward algorithm: The forward 
direction is executed by  Kalman filtering (steps KF1-KF5); and the backward 
direction is performed by  Kalman smoothing (steps KS1-KS2) \cite{rauch1965dynamic}. 
The algorithm smooths the state estimates over the interval $[1,\timenum]$.  
Each step incurs  complexity at most 
$\mathcal{O}(\vertexnum^3)$, and hence the overall complexity for estimating 
$\{\signalestvec 
\timegiventimenot{\timeind}{\timenum}\}_{\timeind=0}^{\timenum} 
$ is $\mathcal{O}(\vertexnum^3\timenum)$, which scales 
favorably for large $\timenum$.
For solving \eqref{eq:siginfersvarm}, one should initialize the RTS by $\signalestvec\timegiventimenot{0}{0}=\zerostatevec$, and $ 
\coverrror\timegiventimenot{0}{0}=\identitymat_\vertexnum$.

To estimate the adjacency matrices given  $\{\signalestvec
\timenot{\timeind}\}_{\timeind=1}^{\timenum}$, consider the following problem
\begin{align}
\label{eq:topinfersvarm}
\hspace{-0.5cm}\underset{\adjacencymat\lagnot{0}\in\adjacencyset, 
\adjacencymat\lagnot{1}
}{\minimize}&
\sum_{\timeind=1}^{\timenum}
\|\signalestvec\timenot{\timeind}
-\adjacencymat\lagnot{0} 
\signalestvec\timenot{\timeind}
-\adjacencymat\lagnot{1} 
\signalestvec\timenot{\timeind-1}
\|_2^2\nonumber\\
+&
\regfunelnet(\adjacencymat\lagnot{0})
+\regfunelnet(\adjacencymat\lagnot{1}).
\end{align}
The objective in \eqref{eq:topinfersvarm} is convex albeit nonsmooth, and hence 
ADMM can be adopted to obtain
$\adjacencyestmat\lagnot{0}$ and $ \adjacencyestmat\lagnot{1}$. The ADMM solver is summarized as Algorithm~\ref{algo:svarmadmm}, and its derivation is deferred to the Appendix. 

The overall procedure for joint inference of  signals and graphs over time (JISGoT) is tabulated as Algorithm \ref{algo:DJISG}. Convergence of JISGoT is asserted in the following proposition, the proof of which is similar to Proposition 1, and hence is omitted.

\begin{mypropositionhere} The sequence  of iterates 
$\big\{\{\signalestvec\timenot{\timeind}\iteratenot{\iterateindex}\}_ 
{\timeind=0}^{\timenum},$ $\adjacencyestmat\lagnot{0}\iteratenot{\iterateindex},
\adjacencyestmat\lagnot{1}\iteratenot{\iterateindex}\big\}_\iterateindex$,
resulting from obtaining the global minimizers of~\eqref{eq:siginfersvarm}  
and~\eqref{eq:topinfersvarm}, is bounded and converges 
monotonically 
to a stationary point of~\eqref{eq:topsiginfersvarm}.
\end{mypropositionhere}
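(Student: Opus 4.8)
\noindent The plan is to follow the argument of Proposition~1 and invoke the block coordinate descent (BCD) convergence theory of~\cite{tseng2001convergence}, here with the two block variables $\{\signalvec\timenot{\timeind}\}_{\timeind=0}^{\timenum}$ and the pair $(\adjacencymat\lagnot{0},\adjacencymat\lagnot{1})$. Two features set~\eqref{eq:topsiginfersvarm} apart from~\eqref{eq:topsiginfer}: the signal subproblem~\eqref{eq:siginfersvarm} no longer decouples across $\timeind$, because of the time-lagged term $\adjacencymat\lagnot{1}\signalvec\timenot{\timeind-1}$, and the topology subproblem~\eqref{eq:topinfersvarm} is solved over the pair $(\adjacencymat\lagnot{0},\adjacencymat\lagnot{1})$ jointly. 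Neither obstructs the BCD machinery: as established by the preceding proposition, the RTS smoother of Algorithm~\ref{algo:GRTS} returns the \emph{exact} global minimizer of~\eqref{eq:siginfersvarm}; and~\eqref{eq:topinfersvarm} is jointly convex in $(\adjacencymat\lagnot{0},\adjacencymat\lagnot{1})$---strongly so, owing to the Frobenius-norm terms in $\regfunelnet(\cdot)$---so the ADMM solver of Algorithm~\ref{algo:svarmadmm} returns its (unique) global minimizer. Hence each outer iteration of Algorithm~\ref{algo:DJISG} consists of exact minimizations over the two blocks in turn.

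\noindent Next I would verify the hypotheses of the BCD convergence results in~\cite{tseng2001convergence}. The objective in~\eqref{eq:topsiginfersvarm} is continuous; it is the sum of quadratic (hence smooth) terms and of the nonsmooth but \emph{coordinatewise separable} terms $2\regparadjone\|\adjacencymat\lagnot{0}\|_1$ and $2\regparadjone\|\adjacencymat\lagnot{1}\|_1$; and, as noted right after~\eqref{eq:topsiginfersvarm}, it is convex in each block separately. Under exactly these conditions, \cite[Lemma~3.1]{tseng2001convergence} certifies that the objective is regular at every coordinatewise minimum, so that each coordinatewise minimum is a stationary point of~\eqref{eq:topsiginfersvarm}. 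I would also record that every block subproblem attains its minimum: the Frobenius-norm regularizers render~\eqref{eq:topinfersvarm} coercive in $(\adjacencymat\lagnot{0},\adjacencymat\lagnot{1})$, and keeping $\auxadj=\identitymat_\vertexnum-\adjacencymat\lagnot{0}$ invertible makes $\transcov=(\auxadj\transpose\auxadj)\inv\succ\mathbf{0}$, so the weighted quadratic in~\eqref{eq:siginfersvarmreform}---which pins $\signalvec\timenot{0}$ through $\|\signalvec\timenot{0}-\zerostatevec\|_2^2$ and then each $\signalvec\timenot{\timeind}$ through the positive-definite term $\|\signalvec\timenot{\timeind}-\transmat\signalvec\timenot{\timeind-1}\|_{\transcov}^2$---is coercive in $\{\signalvec\timenot{\timeind}\}_{\timeind=0}^{\timenum}$.

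\noindent Finally I would pin down monotonicity and boundedness so that~\cite[Theorem~5.1]{tseng2001convergence} applies. Monotonicity is immediate: each block is updated by exactly minimizing a convex subproblem, so the value of~\eqref{eq:topsiginfersvarm} is nonincreasing along the iterates and, being bounded below, converges. For boundedness, the terms $\regparadjtwo\|\adjacencymat\lagnot{0}\|_F^2$ and $\regparadjtwo\|\adjacencymat\lagnot{1}\|_F^2$ confine $\adjacencyestmat\lagnot{0}\iteratenot{\iterateindex}$ and $\adjacencyestmat\lagnot{1}\iteratenot{\iterateindex}$ to a fixed ball, and the corresponding signal iterates are the RTS output of Algorithm~\ref{algo:GRTS}---a bounded function of $\transmat$, $\transcov$, $\zerostatevec$ and $\{\observationvec\timenot{\timeind},\samplemat\timenot{\timeind}\}_{\timeind=1}^{\timenum}$---so the whole iterate sequence stays in a bounded set. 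Appealing to~\cite[Theorem~5.1]{tseng2001convergence}, the iterates converge monotonically to a coordinatewise minimum of~\eqref{eq:topsiginfersvarm}, and hence to a stationary point. The step I expect to be most delicate is exactly this boundedness claim: in contrast with~\eqref{eq:siginferone}, the signal subproblem couples all slots, so one must lean on the invertibility of $\auxadj$ together with the monotone descent of~\eqref{eq:topsiginfersvarm} to keep $\transcov$---and therefore the smoothed states produced by Algorithm~\ref{algo:GRTS}---bounded across outer iterations.
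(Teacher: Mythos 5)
Your proposal is correct and follows exactly the route the paper intends: the paper omits this proof, stating only that it is ``similar to Proposition 1,'' i.e., an application of the BCD convergence results of Tseng (Lemma~3.1 for regularity via separability of the $\ell_1$ terms, Theorem~5.1 for monotone convergence to a coordinatewise minimum). You in fact supply more detail than the paper does---notably the coercivity/boundedness discussion and its reliance on the invertibility of $\identitymat_\vertexnum-\adjacencymat\lagnot{0}$, which the paper simply assumes---so your write-up is a strictly more careful version of the same argument.
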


\begin{algorithm}[t]                
\caption{ADMM for topology inference}
\label{algo:svarmadmm}    
\vspace{0.2cm}
\begin{minipage}{40cm}
\indent\textbf{Input:} 
$\{\signalestvec\timenot{\timeind}\}_{\timeind=0}^{\timenum}$, 
$\regparadjone,\regparadjtwo$
\vspace{0.2cm}
\begin{algorithmic}[1]
\STATE
\textbf{Initialization.}  Initialize all variables to 
zero. 
\vspace{0.1cm}
\STATE 	\textbf{while} {iterates not converge}
\textbf{do}\hspace{0.1cm}\\		
\STATE \hspace{1cm}Update  
$\adjacencymat\lagnot{0}$ using \eqref{eq:ad0update}.
\STATE \hspace{1cm}Update  
$\adjacencymat\lagnot{1}$ using \eqref{eq:ad1update}.
\STATE \hspace{1cm}Update auxiliary variable 
$\adjacencymathelp\lagnot{0}$ using \eqref{eq:adhelp0update}.
\STATE \hspace{1cm}Update auxiliary variable 
$\adjacencymathelp\lagnot{1}$ using \eqref{eq:adhelp1update}. 
\STATE \hspace{1cm}Update Lagrange multipliers using 
\eqref{eq:helpupdate}. 
\STATE\textbf{end while}
\end{algorithmic}
\vspace{0.2cm}
\indent\textbf{Output:} {$\adjacencymat\lagnot{0}, 
\adjacencymat\lagnot{1}$}
.
\end{minipage}
\end{algorithm}

\begin{algorithm}[t]                
\caption{JISG over time (JISGoT)}
\label{algo:DJISG}    
\vspace{0.2cm}
\begin{minipage}{40cm}
\indent\textbf{Input:} Observations 
$\{\observationvec\layernot{\timeind}\}_{\timeind=1}^{\timenum}$; sampling 
matrices
$\{\samplemat\layernot{\timeind}\}_{\timeind=1}^{\timenum}$;
\\ \hspace*{1.1cm} and regularization parameters 
$\{\regpar,\regparadjone,\regparadjtwo\}$
\vspace{0.2cm}
\begin{algorithmic}[1]
\STATE\emph{Intialize:} 
$\signalestvec\layernot{\timeind}\iteratenot{0}= 
\samplemat\layernot{\timeind}\transpose 
\observationvec\layernot{\timeind},~\timeind=1,\ldots,\timenum$
\vspace{0.1cm}
\STATE 	\textbf{while} {iterates not converge}
\textbf{do}\hspace{0.1cm}\\		
\STATE \hspace{1cm}Estimate 
$\adjacencyestmat\lagnot{0}\iteratenot{\iterateindex}$, and 
$\adjacencyestmat\lagnot{1}\iteratenot{\iterateindex}$ from Algorithm 
\ref{algo:svarmadmm}.
\STATE \hspace{1cm}Update 
$\{\signalestvec\layernot\timeind \iteratenot{\iterateindex}\}_{\timeind=0}	
^{\timenum} $ from Algorithm \ref{algo:GRTS}.
\STATE \hspace{1cm}{$\iterateindex=\iterateindex+1$}
\STATE\textbf{end while}
\end{algorithmic}
\vspace{0.2cm}
\indent\textbf{Output:} {$\{\signalestvec\layernot\timeind\}_ 
{\timeind=0}^{\timenum},\adjacencyestmat\lagnot{0}, \adjacencyestmat\lagnot{1}$}
.
\end{minipage}
\end{algorithm}


\subsection{Fixed-lag solver for online JISGoT}
JISGoT performs \emph{fixed-interval} smoothing, since the whole batch 
$\{\observationvec\timenot{\timeind}\}_{\timeind=1}^{\timenum}$ has to be 
available to learn $\{\signalestvec 
\timegiventimenot{\timeind}{\timenum}\}_{\timeind=0}^{\timenum}$.
Albeit 
useful for applications such as processing electroencephalograms, and analysis 
of historical trade transaction data, this batch solver is not suitable 
for online applications, such as stock market prediction, analysis of online social networks, and propagation of cascades over interdependent power networks. Such applications  enforce strict delay constraints and require estimates within a window or fixed lag \cite{anderson1979optimal}.

Driven by the aforementioned delay constraints, the goal here is to 
estimate $\signalvec\timenot{\timeind}, \adjacencymat\lagnot{0}, \adjacencymat\lagnot{1}$, 
relying upon observations  up to time $\timepluslagnum:=\timeind+\lagnum$,  with $\lagnum$ denoting the  affordable 
delay window length. 
Supposing that a KF (cf. Algorithm \ref{algo:GRTS}) has been run up to time 
$\timeind$ to yield estimates $\signalestvec\timegiventimenot 
{\timeind}{\timeind}$ and $\errorcovkf\timegiventimenot{\timeind}{\timeind}$, 
the desired estimates $\adjacencyestmat\lagnot{1}\timenot{\timepluslagnum},
\adjacencyestmat\lagnot{0}\timenot{\timepluslagnum},
\{\signalestvec\timegiventimenot{\tau}{\timepluslagnum}\}_{\tau=\timeind}^
{\timepluslagnum}$ can be obtained at $\timepluslagnum$ by solving the 
following problem
\begin{align}
\label{eq:topsiginfersvarmfixedlag}
&\underset{\adjacencymat\lagnot{0},
\adjacencymat\lagnot{1}\in\adjacencyset,\atop
\{\signalvec\timenot{\tau}
\}_{\tau=\timeind}^{\timepluslagnum}
}{\arg\min}
\!\sum_{\tau=\timeind+1}^{\timepluslagnum}
\|\signalvec\timenot{\tau}
-\adjacencymat\lagnot{0}\signalvec\timenot{\tau}
-\adjacencymat\lagnot{1}\signalvec\timenot{\tau-1}
\|_2^2\\
&\hspace{0cm}+\|\signalvec\timenot{\timeind}- 
\signalestvec\timegiventimenot 
{\timeind}{\timeind}\|^2_{\errorcovkf 
	\timegiventimenot{\timeind}{\timeind}}+
\sum_{\tau=\timeind+1}^{\timepluslagnum}
\frac{\regpar} 
{\samplenum\timenot{\tau}}\|\observationvec 
\timenot{\tau}- 
\samplemat\timenot{\tau}\signalvec 
\timenot{\tau}\|_2^2+\regfunelnet(\adjacencymat\lagnot{0})\nonumber\\
&\hspace{0cm}
+\regfunelnet(\adjacencymat\lagnot{1})
+\regsmoothadj\|\adjacencymat\lagnot{0}-\adjacencyestmat\lagnot{0}\timenot{\timepluslagnum -1}\|_F^2
+\regsmoothadj\|\adjacencymat\lagnot{1}-\adjacencyestmat\lagnot{1}\timenot{\timepluslagnum -1}\|_F^2
\nonumber
\end{align}
where $\adjacencyestmat\lagnot{0}\timenot{\timepluslagnum-1}$ and $\adjacencyestmat\lagnot{1}\timenot{\timepluslagnum-1}$ are the solutions to \eqref{eq:topsiginfersvarmfixedlag} at $\timepluslagnum-1$, and $\regsmoothadj$ controls the effect of the LS terms that promote slow-varying $\{\adjacencyestmat\lagnot{l}\timenot{\timepluslagnum}\}_{l=0,1}$. Similar to \eqref{eq:topsiginfersvarm},
the fixed-lag objective \eqref{eq:topsiginfersvarmfixedlag}   will be solved 
via  a BCD algorithm to a stationary point. Observe that 
solving~\eqref{eq:topsiginfersvarmfixedlag} for 
$\{\signalestvec\timegiventimenot{\tau}{\timepluslagnum}\}_{\tau=\timeind}^
{\timepluslagnum}$ is a special case of the fixed-interval objective in~\eqref{eq:siginfersvarm}, when the initial condition on the state, namely $\signalestvec\timegiventimenot 
{\timeind}{\timeind}$, and ${\errorcovkf\timegiventimenot{\timeind}{\timeind}}$,
are given by the RTS algorithm, and the state is smoothed over the interval $[\timeind,\timepluslagnum]$; see \cite{anderson1979optimal} for details on the fixed-lag smoother. Solving for $\{\adjacencyestmat\lagnot{l}\timenot{\timepluslagnum}\}_{l=0,1}$ entails the additional quadratic terms $\{
\|\adjacencymat\lagnot{l}-\adjacencyestmat\lagnot{l}\timenot{\timepluslagnum-1}\|_F^2
\}_{l=0,1}$ relative to \eqref{eq:topinfersvarm} that can be easily incorporated in Algorithm~\ref{algo:svarmadmm}.


Thus,  one can employ Algorithm \ref{algo:DJISG} with minor modifications to solve the fixed-lag objective \eqref{eq:topsiginfersvarmfixedlag} and estimate
$\{\adjacencyestmat\lagnot{1}\timenot{\timepluslagnum},
\adjacencyestmat\lagnot{0}\timenot{\timepluslagnum},
\{\signalestvec\timegiventimenot{\tau}{\timepluslagnum}\}_{\tau=\timeind}^
{\timepluslagnum}\}$. 
As a convenient byproduct, the novel online 
estimator~\eqref{eq:topsiginfersvarmfixedlag} tracks dynamic topologies from 
the time-varying estimates $\adjacencyestmat\lagnot{1}\timenot{\timepluslagnum}$ and $
\adjacencyestmat\lagnot{0}\timenot{\timepluslagnum}$. 
This is well-motivated when the process is non-stationary, and the underlying 
topologies change over time.

\begin{myremarkhere}
Although this section builds upon the SVARM in
\eqref{eq:svarmvec} that accounts only for a single time-lag, the 
proposed algorithms can be readily extended to accommodate SVARMs with multiple time-lags, 
i.e.,  
$\signalvec\timenot{\timeind}
=
\sum_{\xi=0}^{\Xi}\adjacencymat\lagnot{\xi}\signalvec\timenot{\timeind- \xi}
+\semnoisevec\timenot{\timeind}
$.
By defining 
the $\Xi\vertexnum\times 1$ extended vector process 
$\extsignalvec\timenot{\timeind}=[\signalvec\timenot{\timeind}\transpose, 
\signalvec\timenot{\timeind-1}\transpose, 
\ldots,\signalvec\timenot{\timeind-\Xi+1}\transpose]\transpose$, 
the $\Xi\vertexnum\times \Xi\vertexnum$ block matrix 
$\extadjacencymat\lagnot{1}$, with 
entries 
$\extadjacencymat\vertexvertexnot{1}{\xi}\lagnot{1}=\adjacencymat 
\lagnot{\xi},~\xi=1,\ldots,\Xi$,
and zero otherwise, the $\Xi\vertexnum\times \Xi\vertexnum$ 
block matrix $\extadjacencymat\lagnot{0}$, with 
entries 
$\extadjacencymat\vertexvertexnot{1}{1} 
\lagnot{0}=\adjacencymat\lagnot{0}$,
$\extadjacencymat\vertexvertexnot{\vertexind} 
{\vertexind}=\identitymat_\vertexnum$ 
for $
\vertexind=2,\ldots,\Xi$ 
and zero otherwise, and the $\Xi\vertexnum\times 1$ error 
vector $\bar{\semnoisevec}\timenot{\timeind}= 
[\semnoisevec\timenot{\timeind}\transpose,\bm 0\transpose,\ldots, \bm 0\transpose]\transpose$, the general SVARM can be 
written as 	$\extsignalvec\timenot{\timeind}=\extadjacencymat\lagnot{0} 
\extsignalvec\timenot{\timeind}+ \extadjacencymat\lagnot{1} 
\extsignalvec\timenot{\timeind-1} 
+\bar{\semnoisevec}\timenot{\timeind}$, which resembles a single time-lag 
SVARM in~\eqref{eq:svarmvec}. 
\end{myremarkhere}

\section{Identifiability analysis}
\label{sec:identanal}
This section provides results on the identifiability of the network topology given observations  at subsets of nodes under the noise-free SEM. Specifically, in the absence of noise \eqref{eq:semvec} and \eqref{eq:obsvec}  with  $\semnoisevec\layernot{\layeridex}=\observationnoisevec \layernot{\layeridex}=\boldsymbol{0}$,  can be  written as 
\begin{subequations}
\begin{align}
\label{eq:semvecext}
\signalvec\layernot{\layeridex} &=\adjacencymat\signalvec\layernot{\layeridex}\\
\label{eq:obsvecext}
\observationvecwithmis\layernot
{\layeridex}&=\samplematwithmis\layernot{\layeridex} 
\signalvec\layernot{\layeridex},~~~\layeridex=1,\ldots,\layernum
\end{align}
\end{subequations}
where $\samplematwithmis\layernot{\layeridex}\in\{0,1\}^{N\times N}$ with 
$[\samplematwithmis\layernot{\layeridex}]\vertexvertexnot{\vertexind}{\vertexind}:= \samplematwithmisentry{\vertexind}{\vertexind}{\layeridex}=1$
if 
$\vertexind\in\sampleset\layernot{\layeridex}$, and zero otherwise, 
and $\observationvecwithmis\layernot{\layeridex}\in\rfield^{N}$  with 
$\observationfunwithmis
\vertexnot{\vertexind}\layernot{\layeridex}= \signalfun\vertexnot{\vertexind}\layernot{\layeridex}$ if 
$\vertexind\in\sampleset\layernot{\layeridex}$, and zero if $\vertexind$ is not sampled at $\layeridex$. The $\vertexnum\times\layernum$ matrix
$\observationmatwithmis:=[\observationvecwithmis\layernot{1},
\ldots, \observationvecwithmis\layernot{\layernum}]\transpose$ collects all the observations.

\begin{mydefinition}
\label{def:krus}
The Kruskal rank of an $N\times M$ matrix $\boldsymbol{Y}$ (denoted
hereafter as ${\rm Kruskal}(\boldsymbol{Y})$) is defined as the maximum number $k$ such
that any combination of $k$ columns of $\boldsymbol{Y}$
constitutes a full rank
matrix.
\end{mydefinition}

\begin{figure}[t]
\centering
\includegraphics[width=7.cm]{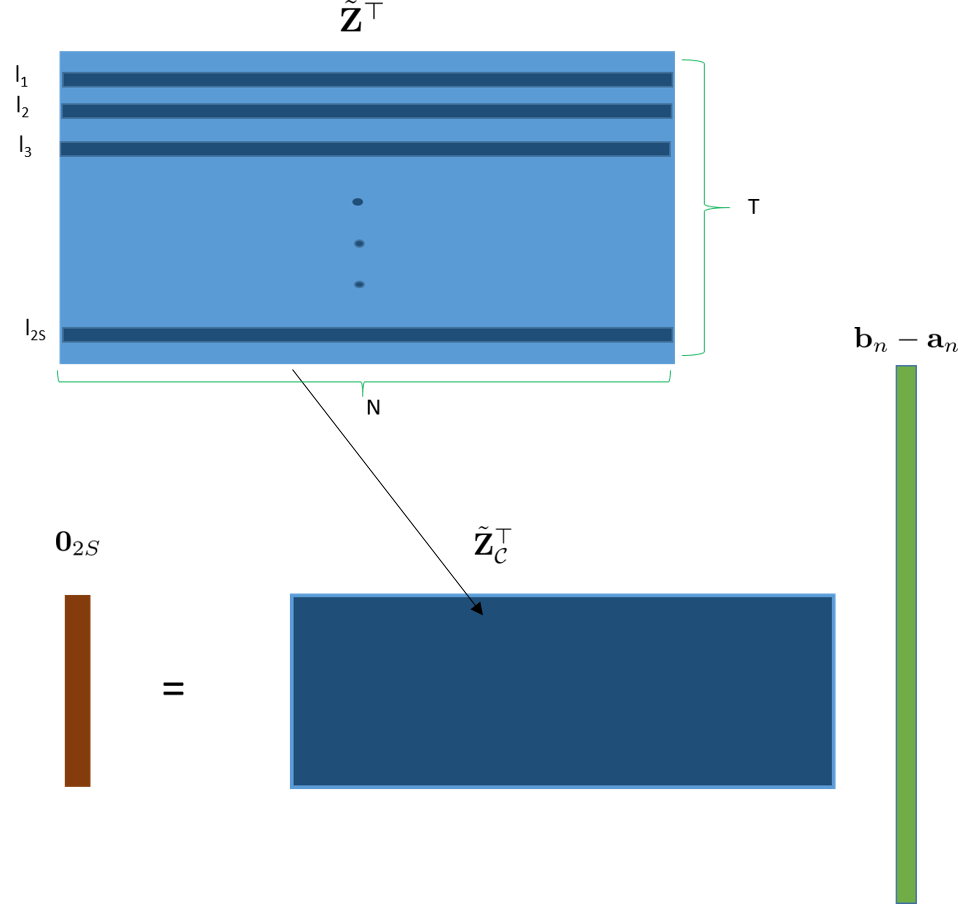}
\caption{Illustration of the matrices in  Theorem \ref{th:identsem} 
}\label{fig:thm1}
\end{figure}
To establish identifiability results, we rely on a couple of assumptions.

\noindent\textbf{as1.}  Matrix 
$\adjacencymat$
has at most $\sparsitynum$ non-zero entries per row.\\
\noindent\textbf{as2.} 
There exists a subset of columns indexed by $\obscolumset\define{\{\layeridex_1,\layeridex_2,\ldots,
\layeridex_{2\sparsitynum}\}}, 1\le\layeridex_i\le\layernum$, such that the 
coresponding
$\vertexnum\times2\sparsitynum$ sub-matrix 
$\observationmatwithmis_{\obscolumset}:=[\observationvecwithmis\layernot{\layeridex_1},
\ldots, 
\observationvecwithmis\layernot{\layeridex_{2\sparsitynum}}]\transpose$  is fully observable, i.e.,
$\samplematwithmis\layernot{\layeridex}=\identitymat_N,$  and $\observationvecwithmis\layernot{\layeridex}=\signalvec\layernot{\layeridex},$
$\forall\layeridex \in\obscolumset$,   and  ${\rm Kruskal}(\observationmatwithmis_{\obscolumset}\transpose)\geq 2\sparsitynum$.

\begin{mytheoremhere}
Under as1 and as2, the adjacency matrix $\adjacencymat$ can be uniquely 
identified from  
\eqref{eq:semvecext} and \eqref{eq:obsvecext}.
\label{th:identsem}
\end{mytheoremhere}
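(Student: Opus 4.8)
The plan is to show that each column $\signalvec\layernot{\layeridex}$ is recoverable from the partial observations under \textbf{as2}, and then argue that the recovered columns pin down $\adjacencymat$ uniquely by virtue of the sparsity bound in \textbf{as1}. Write $\auxadj:=\identitymat_\vertexnum-\adjacencymat$, so that \eqref{eq:semvecext} reads $\auxadj\signalvec\layernot{\layeridex}=\boldsymbol 0$; each signal vector lies in $\nullspace{\auxadj}$. First I would observe that on the index set $\obscolumset$ the observations are complete, i.e. $\observationvecwithmis\layernot{\layeridex}=\signalvec\layernot{\layeridex}$ for $\layeridex\in\obscolumset$, so the sub-matrix $\observationmatwithmis_{\obscolumset}\transpose$ equals $[\signalvec\layernot{\layeridex_1},\ldots,\signalvec\layernot{\layeridex_{2\sparsitynum}}]$ exactly and is fully known. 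The Kruskal condition ${\rm Kruskal}(\observationmatwithmis_{\obscolumset}\transpose)\ge 2\sparsitynum$ (Definition \ref{def:krus}) guarantees that any $2\sparsitynum$ columns of this matrix --- i.e. all of them --- are linearly independent, so these $2\sparsitynum$ known signal vectors span a $2\sparsitynum$-dimensional subspace $\mathcal{S}\subseteq\nullspace{\auxadj}$.

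Next I would fix a row index $\vertexind$ and show that the corresponding row $\adjacencymatrow$ of $\adjacencymat$ is the unique vector consistent with the data and with \textbf{as1}. From $\auxadj\signalvec\layernot{\layeridex}=\boldsymbol 0$, the $\vertexind$-th scalar equation is $\signalfun\vertlayernot{\vertexind}{\layeridex}=\adjacencymatrow\signalvec\layernot{\layeridex}$ for every $\layeridex$, and in particular for the $2\sparsitynum$ fully observed columns in $\obscolumset$. Suppose there were two admissible rows $\adjacencymatrow$ and $\adjacencymatrow'$, each with at most $\sparsitynum$ nonzeros by \textbf{as1} and each satisfying these $2\sparsitynum$ linear equations. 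Then their difference $\mathbf{e}:=\adjacencymatrow-\adjacencymatrow'$ has at most $2\sparsitynum$ nonzero entries and satisfies $\mathbf{e}\transpose\signalvec\layernot{\layeridex_i}=0$ for $i=1,\ldots,2\sparsitynum$; equivalently $\mathbf{e}\transpose\observationmatwithmis_{\obscolumset}\transpose=\boldsymbol 0\transpose$. Restricting to the support of $\mathbf{e}$, which has size $s'\le 2\sparsitynum$, this says that the $s'$ rows of $\observationmatwithmis_{\obscolumset}\transpose$ indexed by $\mathrm{supp}(\mathbf{e})$ are linearly dependent --- but the Kruskal bound on the \emph{columns} of $\observationmatwithmis_{\obscolumset}\transpose$ is not directly the statement about rows, so the clean way is: the $N\times 2\sparsitynum$ matrix $\observationmatwithmis_{\obscolumset}\transpose$, having Kruskal rank $2\sparsitynum$ on its columns and $2\sparsitynum\le N$ columns, has full column rank $2\sparsitynum$; hence any $2\sparsitynum$ of its rows that one might hope to use need a companion argument. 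Cleaner still: since $\signalvec\layernot{\layeridex}\in\nullspace{\auxadj}$, we have $\dim\nullspace{\auxadj}\ge \rank(\observationmatwithmis_{\obscolumset}\transpose)=2\sparsitynum$, and the rows of $\auxadj$ are thus orthogonal to a $2\sparsitynum$-dimensional space; combined with each row having at most $\sparsitynum$ nonzeros off the diagonal plus the diagonal entry, the row $\auxadj_{\vertexind,:}$ is $(\sparsitynum+1)$-sparse, so $\mathbf{e}$ (restricted to its $\le 2\sparsitynum$-sized support) lying in the orthogonal complement of a $2\sparsitynum$-dimensional subspace forces $\mathbf{e}=\boldsymbol 0$ because a nonzero $k$-sparse vector cannot be orthogonal to $k$ linearly independent vectors supported within those $k$ coordinates. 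This is exactly the spark/Kruskal uniqueness argument standard in compressed sensing.

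I would then assemble the per-row conclusions: each row $\adjacencymatrow$ is uniquely determined, hence $\adjacencymat$ is unique, and since $\auxadj=\identitymat_\vertexnum-\adjacencymat$ is then determined, all remaining $\signalvec\layernot{\layeridex}$ for $\layeridex\notin\obscolumset$ are recovered from $\observationvecwithmis\layernot{\layeridex}=\samplematwithmis\layernot{\layeridex}\signalvec\layernot{\layeridex}$ together with $\auxadj\signalvec\layernot{\layeridex}=\boldsymbol 0$ (the sampled entries plus the SEM constraints form a consistent system with a unique solution). The main obstacle I anticipate is getting the linear-algebra bookkeeping exactly right in the sparsity-uniqueness step: one must be careful that the relevant guarantee is a statement about the $2\sparsitynum$ known signal vectors being in ``general position'' in the sense that no $2\sparsitynum$-sparse vector is orthogonal to all of them, which is precisely what ${\rm Kruskal}(\observationmatwithmis_{\obscolumset}\transpose)\ge 2\sparsitynum$ delivers once one notes the known columns are the signal vectors themselves. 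A secondary subtlety is handling the diagonal constraint $\adjacencymatentry\vertexvertexnot{\vertexind}{\vertexind}=0$ so that the row search is genuinely over $\sparsitynum$-sparse vectors with a fixed zero in position $\vertexind$, which only strengthens uniqueness and does not complicate the counting.
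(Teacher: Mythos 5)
Your overall strategy is the same as the paper's: fix a row index $\vertexind$, note that two admissible rows $\adjacencymatrow\vertexnot{\vertexind}$ and $\adjacencymatrowother\vertexnot{\vertexind}$, each $\sparsitynum$-sparse, have a difference $\mathbf{e}$ with at most $2\sparsitynum$ nonzeros that is annihilated by the fully observed data on $\obscolumset$, and then invoke the Kruskal-rank condition to force $\mathbf{e}=\mathbf{0}$. The gap is in the last step, and it is exactly the one you flag mid-proof but do not actually close. You read $\observationmatwithmis_{\obscolumset}\transpose$ as the $\vertexnum\times2\sparsitynum$ matrix whose \emph{columns are the $2\sparsitynum$ signal vectors}, so that the Kruskal condition only says those $2\sparsitynum$ vectors are linearly independent in $\rfield^{\vertexnum}$. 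That is far too weak: take $\signalvec\layernot{\layeridex_1},\ldots,\signalvec\layernot{\layeridex_{2\sparsitynum}}$ to be the first $2\sparsitynum$ standard basis vectors of $\rfield^{\vertexnum}$ with $\vertexnum>2\sparsitynum$; they are linearly independent, yet the $1$-sparse vector $\mathbf{e}_{2\sparsitynum+1}$ is orthogonal to all of them. The condition actually used in the paper (and the one the equation $\mathbf{0}_{2\sparsitynum}=\observationmatwithmis_{\obscolumset}\transpose\mathbf{e}$ with $\mathbf{e}\in\rfield^{\vertexnum}$ forces dimensionally) is on the $2\sparsitynum\times\vertexnum$ matrix whose columns are indexed by \emph{vertices}: any $2\sparsitynum$ of those $\vertexnum$ vertex-profiles must be linearly independent, equivalently every $2\sparsitynum\times2\sparsitynum$ row-submatrix of $[\signalvec\layernot{\layeridex_1},\ldots,\signalvec\layernot{\layeridex_{2\sparsitynum}}]$ is nonsingular. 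That is a spark condition over all $\binom{\vertexnum}{2\sparsitynum}$ supports, not a single full-column-rank statement.

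Your attempted repair via $\dimr\nullspace{\identitymat_\vertexnum-\adjacencymat}\ge2\sparsitynum$ does not work either: the claim that a nonzero $2\sparsitynum$-sparse vector cannot be orthogonal to a $2\sparsitynum$-dimensional subspace of $\rfield^{\vertexnum}$ is false in general (the counterexample above again), and your justification (``$k$ linearly independent vectors supported within those $k$ coordinates'') assumes the signal vectors are supported on $\mathrm{supp}(\mathbf{e})$, which they are not. What you need is that their \emph{restrictions} to the $2\sparsitynum$ coordinates of $\mathrm{supp}(\mathbf{e})$ still span $\rfield^{2\sparsitynum}$ for every possible support, and that is precisely the Kruskal condition in the correct orientation. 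Once you apply as2 to the vertex-indexed columns of $\observationmatwithmis_{\obscolumset}\transpose$, the argument closes immediately and coincides with the paper's proof; the concluding remarks about recovering $\signalvec\layernot{\layeridex}$ for $\layeridex\notin\obscolumset$ are not needed for the theorem, which only asserts identifiability of $\adjacencymat$.
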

\begin{proof}
Combining  \eqref{eq:semvecext} and \eqref{eq:obsvecext} leads to
\begin{align}
\label{eq:semobsvecext}
\observationvecwithmis\layernot{\layeridex}&=\samplematwithmis
\layernot{\layeridex} 
\adjacencymat\signalvec\layernot{\layeridex},~~~\layeridex=1,\ldots,\layernum.
\end{align}
Considering the equations indexed by $\layeridex\in \obscolumset$ along with as2, 
we have 
$
\observationvecwithmis\layernot{\layeridex}=
\adjacencymat\observationvecwithmis\layernot{\layeridex},~\layeridex\in\obscolumset
$,
which leads to the matrix form 
\begin{align}
\label{eq:combin}
\observationmatwithmis_{\obscolumset}=
\adjacencymat\observationmatwithmis_{\obscolumset}.
\end{align}
Letting
$\observationmatwithmisrow\vertexnot{\vertexind}\transpose$ and 
$\adjacencymatrow\vertexnot{\vertexind}\transpose$ denote the $n$-th rows of 
$\observationmatwithmis_{\obscolumset}$ and $\adjacencymat$ respectively, the 
row-wise version of \eqref{eq:combin} can be written as
\begin{align}
\label{eq:combinexp}
\observationmatwithmisrow\vertexnot{\vertexind}
=\observationmatwithmis_{\obscolumset}\transpose\adjacencymatrow\vertexnot{\vertexind}.
\end{align}
Suppose there exists 
a vector 
$\adjacencymatrowother\vertexnot{\vertexind}\in\rfield^{\vertexnum\times1}$ 
with $\sparsitynum$ 
nonzero entries, and 
$\adjacencymatrowother\vertexnot{\vertexind}\ne\adjacencymatrow\vertexnot{\vertexind}$,
 such 
that the 
following holds
\begin{align}
\label{eq:combinexprowdi}
\observationmatwithmisrow\vertexnot{\vertexind}
=\observationmatwithmis_{\obscolumset}\transpose\adjacencymatrowother\vertexnot{\vertexind}.
\end{align}
Combining \eqref{eq:combinexprowdi} and \eqref{eq:combinexp}, we arrive at
\begin{align}
\label{eq:conditionident}
\mathbf{0}_{2\sparsitynum}
=\observationmatwithmis_{\obscolumset}\transpose
(\adjacencymatrowother\vertexnot{\vertexind}-\adjacencymatrow 
\vertexnot{\vertexind}).
\end{align}
Since 
$\adjacencymatrow\vertexnot{\vertexind}$ and $
\adjacencymatrowother\vertexnot{\vertexind}$ both have $\sparsitynum$ nonzero 
entries, vector
$(\adjacencymatrowother\vertexnot{\vertexind}-\adjacencymatrow 
\vertexnot{\vertexind})$ has at most $2\sparsitynum$ nonzero 
entries (see Fig. \ref{fig:thm1}). According to as2, it holds that
${\rm Kruskal}(\observationmatwithmis_{\obscolumset}\transpose)=2\sparsitynum$, and thus any 
$2\sparsitynum$  columns of $\observationmatwithmis_{\obscolumset}\transpose$ 
are	linearly independent; see Definition \ref{def:krus}, which implies $
\adjacencymatrowother\vertexnot{\vertexind}=\adjacencymatrow 
\vertexnot{\vertexind}$ and henceforth leads to a contradiction. The argument 
applies for all $\vertexind=1,\ldots,\vertexnum$.
\end{proof}

Theorem \ref{th:identsem} establishes the sufficient condition for identifying  the network structure given full observations for some $\layeridex$. As1 effectively reduces the number of unknowns to $\sparsitynum\vertexnum$ at most. As2 asserts that  the observation matrix $\observationmatwithmis$ is expressive enough to identify $\adjacencymatrow\vertexnot{\vertexind}$ uniquely.
However, if one does not have control over the sampling process, $\observationvecwithmis \layernot{\layeridex}$ may not be observable at all nodes per slot $\layeridex$. Therefore, the present paper further investigates the identifiability conditions when only partial observations of the network process are available per slot $\layeridex$. The results built upon the following assumption.

\noindent\textbf{as3.} For any subset of 
row indices   $\obsrowset\define{\{\vertexind_1,\vertexind_2,\ldots,
\vertexind_{2\sparsitynum}\}}$ with $ 1\le\vertexind_i\le\vertexnum$, there 
exists a subset of  column indices   
$\obscolumset\define{\{\layeridex_1,\layeridex_2,\ldots,
\layeridex_{2\sparsitynum}\}}$ with $1\le\layeridex_i\le\layernum$  that forms 
the $2\sparsitynum\times2\sparsitynum$ matrix 
$\observationmatwithmis_{\obscolumset\obsrowset}$ 
with entries $[\observationmatwithmis_{\obscolumset\obsrowset}]_{i,j}=
\observationfunwithmis\vertexnot{\vertexind_{i}}\layernot{\layeridex_{j}}$ that is fully observable; meaning 
$\observationfunwithmis\vertexnot{\vertexind_{i}}\layernot{\layeridex_{j}} =
\signalfun\vertexnot{\vertexind_{i}}\layernot{\layeridex_{j}}$, and
$[\samplematwithmis\layernot{\layeridex_{j}}]\vertexvertexnot{\vertexind_{i}}{\vertexind_{i}} = \samplematwithmisentry{\vertexind_{i}}{\vertexind_{i}}{\layeridex_{j}}=1,  
\forall \vertexind_{i}\in \obsrowset,\layeridex_{j} \in \obscolumset$, and satisfies 
${\rm Kruskal}(\observationmatwithmis_{\obscolumset\obsrowset}\transpose)=2\sparsitynum$.
\begin{figure}[t]
\centering
\includegraphics[width=8.2cm]{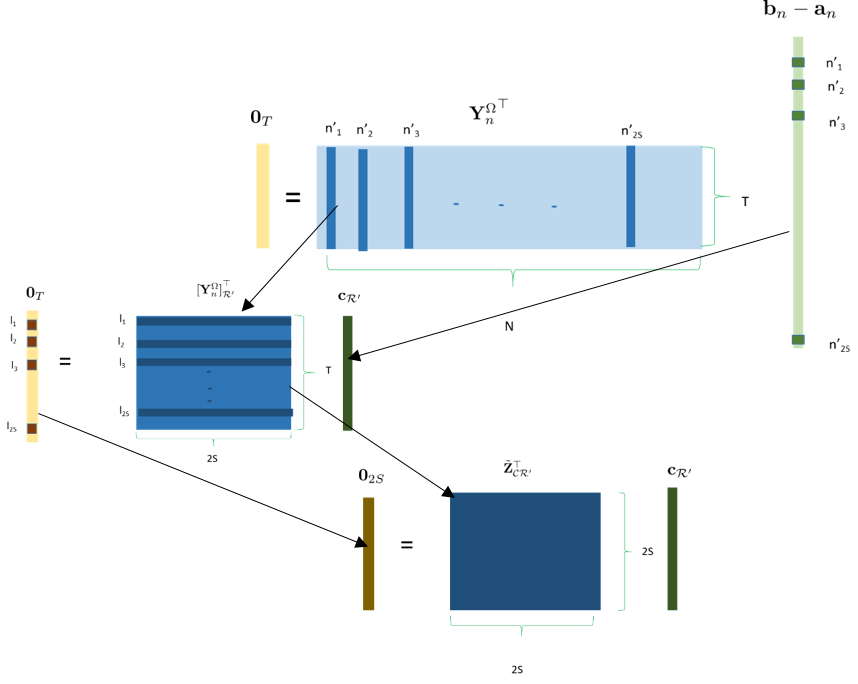}
\caption{Illustration  of the matrices in Theorem \ref{th:identsempartial} 
}\label{fig:thm2}
\end{figure}

\begin{mytheoremhere}
Under as1 and as3, the adjacency matrix $\adjacencymat$ can be uniquely 
identified from  
\eqref{eq:semvecext} and \eqref{eq:obsvecext}.
\label{th:identsempartial}
\end{mytheoremhere}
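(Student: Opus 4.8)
The plan is to adapt the proof of Theorem~\ref{th:identsem}, again arguing one row of $\adjacencymat$ at a time, but now localizing the linear‑algebraic argument to a $2\sparsitynum\times2\sparsitynum$ sub‑block of $\observationmatwithmis$ rather than to $2\sparsitynum$ fully observed columns. Fix a row index $\vertexind$ and suppose, towards a contradiction, that there is a second vector $\adjacencymatrowother\vertexnot{\vertexind}$ with at most $\sparsitynum$ nonzero entries, $\adjacencymatrowother\vertexnot{\vertexind}\neq\adjacencymatrow\vertexnot{\vertexind}$, consistent with the available data, i.e.\ reproducing the observed value at node $\vertexind$ from the observed values at its putative neighbors. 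Because every $\adjacencymat\in\adjacencyset$ has zero diagonal, neither $\adjacencymatrow\vertexnot{\vertexind}$ nor $\adjacencymatrowother\vertexnot{\vertexind}$ is supported on index $\vertexind$, so $\adjacencymatrowother\vertexnot{\vertexind}-\adjacencymatrow\vertexnot{\vertexind}$ has support of size at most $2\sparsitynum$ avoiding $\vertexind$; I would take $\obsrowset$ to be a set of $2\sparsitynum$ row indices containing both this support and $\vertexind$.

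Next I would invoke as3 for this particular $\obsrowset$ to obtain a column set $\obscolumset$ of size $2\sparsitynum$ for which the block $\observationmatwithmis_{\obscolumset\obsrowset}$ is fully observable, so it coincides with the corresponding true signal values and has Kruskal rank $2\sparsitynum$, hence is nonsingular. Restricting the noise‑free SEM relation $\signalfun\vertexnot{\vertexind}\layernot{\layeridex}=\adjacencymatrow\vertexnot{\vertexind}\transpose\signalvec\layernot{\layeridex}$ to the columns $\layeridex\in\obscolumset$, only the coordinates of $\signalvec\layernot{\layeridex}$ indexed by $\mathcal{N}_\vertexind\subseteq\obsrowset$ enter, and all of these — as well as the value at $\vertexind$ itself — are observed on $\obsrowset\times\obscolumset$. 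Collecting the resulting $2\sparsitynum$ scalar equations gives $\observationmatwithmisrow\vertexnot{\vertexind}=\observationmatwithmis_{\obscolumset\obsrowset}\transpose\big(\adjacencymatrow\vertexnot{\vertexind}|_{\obsrowset}\big)$, where $|_{\obsrowset}$ denotes restriction to the coordinates in $\obsrowset$; the same manipulation applied to the competitor yields $\observationmatwithmisrow\vertexnot{\vertexind}=\observationmatwithmis_{\obscolumset\obsrowset}\transpose\big(\adjacencymatrowother\vertexnot{\vertexind}|_{\obsrowset}\big)$.

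Subtracting these identities gives $\mathbf{0}_{2\sparsitynum}=\observationmatwithmis_{\obscolumset\obsrowset}\transpose(\adjacencymatrowother\vertexnot{\vertexind}-\adjacencymatrow\vertexnot{\vertexind})|_{\obsrowset}$, exactly paralleling \eqref{eq:conditionident}. Since $\observationmatwithmis_{\obscolumset\obsrowset}\transpose$ has Kruskal rank $2\sparsitynum$ — equivalently, being square of order $2\sparsitynum$, it is invertible — the restricted difference vanishes, and since the whole difference is supported inside $\obsrowset$ it is the zero vector, so $\adjacencymatrowother\vertexnot{\vertexind}=\adjacencymatrow\vertexnot{\vertexind}$, a contradiction; running this over $\vertexind=1,\ldots,\vertexnum$ identifies $\adjacencymat$. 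The main obstacle, relative to Theorem~\ref{th:identsem}, is precisely this localization: under partial sampling one no longer has a global identity such as \eqref{eq:combin}, so one must carve out a homogeneous $2\sparsitynum$‑dimensional linear system living entirely on an observed sub‑block, and the delicate bookkeeping is that the index set passed to as3 must simultaneously cover the supports of both candidate rows \emph{and} the index $\vertexind$ (so that $\signalfun\vertexnot{\vertexind}\layernot{\layeridex}$ is genuinely observed on the chosen columns); the count $2\sparsitynum=\sparsitynum+\sparsitynum$ is tight, matching the support of a difference of two $\sparsitynum$‑sparse rows against the Kruskal‑rank budget guaranteed by as3.
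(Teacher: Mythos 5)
Your proposal retraces the paper's proof of Theorem~\ref{th:identsempartial} step for step: fix a row $\vertexind$, posit a second $\sparsitynum$-sparse candidate $\adjacencymatrowother\vertexnot{\vertexind}$ consistent with the same data, subtract the two row identities to obtain a homogeneous system in the difference vector, localize that system to a $2\sparsitynum\times2\sparsitynum$ fully observed block supplied by as3, and use the Kruskal-rank condition to force the difference to vanish. So the route is the same; the only substantive divergence is your insistence that the row set handed to as3 contain $\vertexind$ itself. That refinement is actually touching a real subtlety that the paper glosses over: in the paper's chain, the claimed equality ${[\signalmissmat\vertexnot{\vertexind}]}_{\obscolumset\obsrowset'}={[\signalmat]}_{\obscolumset\obsrowset'}$ silently requires node $\vertexind$ to be sampled at every $\layeridex_j\in\obscolumset$ (otherwise the corresponding columns of $\signalmissmat\vertexnot{\vertexind}$ are identically zero and those equations are vacuous), whereas as3 only guarantees observability for the nodes listed in $\obsrowset'$, and $\vertexind$ never lies in the support of the difference because the diagonal of $\adjacencymat$ is zero.

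The problem is that your fix does not close in exactly the worst case the theorem is built for. If $\adjacencymatrow\vertexnot{\vertexind}$ and $\adjacencymatrowother\vertexnot{\vertexind}$ have disjoint supports, the difference is supported on exactly $2\sparsitynum$ indices, none equal to $\vertexind$, so ``a set of $2\sparsitynum$ row indices containing both this support and $\vertexind$'' would need $2\sparsitynum+1$ elements and does not exist; the same counting obstruction affects your intermediate claim that $\mathcal{N}_\vertexind\subseteq\obsrowset$, which you need to write the individual (rather than subtracted) identities over the block. You flag the count as ``tight,'' but tight here means infeasible, not barely feasible. Repairing this cleanly requires either invoking an as3-type condition on $(2\sparsitynum+1)$-element row sets, or separately assuming that $\vertexind$ is sampled on the columns as3 returns for the support of the difference — the paper does neither explicitly, and your version makes the tension visible without resolving it. Apart from this bookkeeping issue (and the minor notational point that $\observationmatwithmisrow\vertexnot{\vertexind}$ must also be restricted to the columns in $\obscolumset$ on the left-hand side), the argument is the paper's.
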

\begin{proof}
Combining  \eqref{eq:semvecext} and \eqref{eq:obsvecext} yields
\begin{align}
\label{eq:semobsvecextpar}
\observationvecwithmis\layernot{\layeridex}&=\samplematwithmis
\layernot{\layeridex} 
\adjacencymat\signalvec\layernot{\layeridex},~~~\layeridex=1,\ldots,\layernum.
\end{align}
Collecting the equations over $\timeind$ leads to  the 
$\vertexnum\times\timenum$ matrix $	\observationmatwithmis= [\samplematwithmis \layernot{1} \adjacencymat\signalvec\layernot{1},\ldots, \samplematwithmis \layernot{\layernum} \adjacencymat\signalvec\layernot{\layernum}]$.
Considering the $\vertexind$-th row of $\observationmatwithmis:= [\observationmatwithmisrow \vertexnot{1},\ldots,\observationmatwithmisrow 
\vertexnot{\vertexnum}]\transpose$ one obtains
\begin{align}
\label{eq:combinexppar}
\observationmatwithmisrow\vertexnot{\vertexind}
=&[\samplematwithmisentry{\vertexind}{\vertexind}{1}
\adjacencymatrow\vertexnot{\vertexind}\transpose\signalvec\layernot{1},\ldots,
\samplematwithmisentry{\vertexind}{\vertexind}{\layernum} 
\adjacencymatrow\vertexnot{\vertexind}\transpose\signalvec\layernot{\layernum} 
]\transpose
=(\signalmissmat\vertexnot{\vertexind})\transpose 
\adjacencymatrow\vertexnot{\vertexind}
\end{align}
where $\signalmissmat\vertexnot{\vertexind}=
[\samplematwithmisentry{\vertexind}{\vertexind}{1}\signalvec\layernot{1},\ldots,
\samplematwithmisentry{\vertexind}{\vertexind}{\layernum} 
\signalvec\layernot{\layernum} ].$
To argue by contradiction, suppose there exists 
a vector 
$\adjacencymatrowother\vertexnot{\vertexind}\in\rfield^{\vertexnum\times1}$ 
with $\sparsitynum$ 
nonzero entries, and 
$\adjacencymatrowother\vertexnot{\vertexind}\ne\adjacencymatrow\vertexnot{\vertexind}$,
 such 
that the 
following holds
\begin{align}
\label{eq:combinexprowdipar}
\observationmatwithmisrow\vertexnot{\vertexind}
=(\signalmissmat\vertexnot{\vertexind})
\transpose\adjacencymatrowother\vertexnot{\vertexind}.
\end{align}
Combining \eqref{eq:combinexprowdipar} with \eqref{eq:combinexppar} one arrives at
\begin{align}
\label{eq:conditionidentpar}
\mathbf{0}_\layernum
=(\signalmissmat\vertexnot{\vertexind})\transpose
(\adjacencymatrowother\vertexnot{\vertexind}-\adjacencymatrow 
\vertexnot{\vertexind}).
\end{align}
Since 
$\adjacencymatrow\vertexnot{\vertexind}$ and $
\adjacencymatrowother\vertexnot{\vertexind}$ both have $\sparsitynum$ nonzero 
entries, 
$\adjacencymatrowother\vertexnot{\vertexind}-\adjacencymatrow 
\vertexnot{\vertexind}$ has at most $2\sparsitynum$ nonzero 
entries. Without loss of generality assume that $\obsrowset'\define
\{\vertexind'_1,\vertexind'_2,\ldots,
\vertexind'_{2\sparsitynum}\}$ contains all the indices of nonzero entries in 
$\adjacencymatrowother\vertexnot{\vertexind}-\adjacencymatrow 
\vertexnot{\vertexind}$. Let  
$\mathbf{c}_{\obsrowset'}:=[\adjacencymatrowother\vertexnot{\vertexind}- 
\adjacencymatrow 
\vertexnot{\vertexind}]_{\obsrowset'}\in\rfield^{2\sparsitynum\times1}$ denote 
the sub-vector containing all entries of 
$\adjacencymatrowother\vertexnot{\vertexind}- \adjacencymatrow 
\vertexnot{\vertexind}$ indexed by $\obsrowset'$. Hence, 
\eqref{eq:conditionidentpar} can be rewritten as 
\begin{align}
\label{eq:conditionidentred1par}
\mathbf{0}_\layernum
=({[\signalmissmat\vertexnot{\vertexind}]}_{\obsrowset'})\transpose
\mathbf{c}_{\obsrowset'}
\end{align}
where ${[\signalmissmat\vertexnot{\vertexind}]}_{\obsrowset'}$ selects the 
rows of $\signalmissmat\vertexnot{\vertexind}$ indexed by $\mathcal{R}^{'}$. 
According to as3, for any 	set of row indices $\obsrowset'$ we can find a set 
of column indices such 
that $\observationmatwithmis_{\obscolumset\obsrowset'}$  is  fully observable,
and thus $\observationmatwithmis_{\obscolumset\obsrowset'}=
{[\signalmissmat\vertexnot{\vertexind}]}_{\obscolumset\obsrowset'}=
{[\signalmat]}_{\obscolumset\obsrowset'}$. As a result, it 
holds that
\begin{align}
\label{eq:conditionidentredpar}
\mathbf{0}_{2\sparsitynum}
=\observationmatwithmis_{\obscolumset\obsrowset'}\transpose
\mathbf{c}_{\obsrowset'}\;.
\end{align}
Since 
${\rm Kruskal}(\observationmatwithmis_{\obscolumset\obsrowset'}\transpose)=2\sparsitynum$, 
any  subset of 
$2\sparsitynum$  columns of 
$\observationmatwithmis_{\obscolumset\obsrowset'}\transpose$ is 
linearly independent, which implies  $	\mathbf{c}_{\obsrowset'}=\mathbf{0}$, 
and by definition $\adjacencymatrowother\vertexnot{\vertexind}=\adjacencymatrow 
\vertexnot{\vertexind}$, which leads to a contradiction.
The analysis from \eqref{eq:combinexppar} to \eqref{eq:conditionidentredpar} 
holds for all $\vertexind=1,\ldots,\vertexnum$.
\end{proof}

Theorem \ref{th:identsempartial} provides the sufficient conditions for identifying the network structure  given partial observations in the noise-free case. As3 implies that each node has to be sampled at least  $2\sparsitynum$ times over $\timenum$.
Fig. \ref{fig:thm2} shows the matrices involved in the proof. 

\section{Numerical tests}
\label{sec:sims}

\cmt{topology id acessment}The tests in this section evaluate the performance of the proposed joint inference approach in comparison with state-of-the-art graph signal inference and topology identification techniques using synthetic and real data.

The network topology perfomance is measured by the edge identification  error 
rate (EIER), defined as
\begin{align*}
\text{EIER}:=\frac{\|\mathbf{S}-\hat{\mathbf{S}}\|_0}{N(N-1)}\times 100\%
\end{align*}
with the operator $\|\cdot\|_0$ denoting the number of nonzero entries of its 
argument, and $\mathbf{S}$ ($\hat{\mathbf{S}}$) the support of 
$\mathbf{A}$ ($\hat{\mathbf{A}}$). For the estimated adjacency an edge is declared 
present if $A_{n,n'}$ exceeds a threshold chosen to yield the smallest EIER.
The inference performance of JISG is assessed by comparing with the normalized mean-square error 
\begin{align*}
\small	\text{NMSE}:=
\sum_{\layeridex=1}^ 
{\layernum} \frac{\|\signalestvec
\layernot{\layeridex} -\signalvec\layernot{\layeridex} 
\|^2_2}{
\|\signalvec\layernot{\layeridex} 
\|^2_2}.
\end{align*}
Parameters $\mu$, $\lambda_1$ and $\lambda_2$ are selected via cross validation.
The software used to conduct all experiments is MATLAB. All results represent averages over 10 independent Monte Carlo runs. Unless otherwise stated, $\sampleset\timenot{\timeind}$ is chosen uniformly at random without replacement over $\vertexset$ for each $\timeind$ with constant size over time; that is, $\samplenum\timenot{\timeind}=\samplenum,~\forall \timeind$.

\subsection{Numerical tests on synthetic data}

\cmt{Graph description}First, a synthetic network of size 
$\vertexnum=81$ 
was generated using the \emph{Kronecker product} model, that effectively captures properties 
of 
real graphs~\cite{leskovec2010kronecker}. It relies on the ``seed matrix'' 
\begin{align*}
\small
\seedzeromat:=\left[ 
\begin{array}{ccc}
0.6   & 0.1 & 0.7\\
0.3  & 0.1 & 0.5 \\
0  & 1 & 0.1
\end{array}
\right]
\end{align*}  		
that produces the  $\vertexnum\times\vertexnum$ matrix as{
$\small\seedmat\!:=\!\seedzeromat
\!\otimes\!\seedzeromat\!\otimes\!\seedzeromat\!\otimes\! 
\seedzeromat$}, where $\otimes$ denotes Kronecker product.
The entries of $\adjacencymat$ were selected as
$\adjacencymatentry\vertexvertexnot{\vertexind}{\vertexindp}\sim\text{Bernoulli}
(\seedmatentry\vertexvertexnot{\vertexind}{\vertexindp})~\forall \vertexind,\vertexindp$, and 
the 
resulting matrix was rendered symmetric by adding its transpose. The graph signals were generated using the graph-bandlimited model
$
\signalvec\layernot{\layeridex}=\sum_{i=1}^{10}\gamma\layernot{\layeridex}^{(i)} 	  
\laplacianevec^{(i)},~\layeridex=1,\ldots,\layernum
$,
where $\layernum=100$, 
$\gamma\layernot{\layeridex}^{(i)}\!\sim\!\mathcal{N}(0,1)$, and  $\{
\laplacianevec^{(i)}\}_{i=1}^{10}$ are the 
eigenvectors
associated with the 10 smallest eigenvalues of the Laplacian matrix $\laplacianmat :=
\text{diag}\{\adjacencymat\boldsymbol 1\}-\adjacencymat$. No noise was added to the observations; that is, $\observationnoisevec\timenot{\layeridex}=\mathbf{0},~\forall\layeridex$.

\cmt{Experiment 2:NMSE vs time: graph function infer
\ra Fig.~\ref{fig:nmsedifmodel}}The compared estimators for graph signal 
inference include the 
bandlimited estimator    
(BL)~\cite{narang2013localized,  
anis2016proxies} %
with bandwidth $\bandwidth$;  and the 
multi-kernel learning (MKL) estimator that employs a dictionary comprising 100 diffusion 
kernels 
with 
parameter 
$\sigma^2$ uniformly spaced between 0.01 and 2, and selects the 
kernel that ``fits" best the 
observed
data~\cite{romero2016multikernel}. These reconstruction algorithms assume the topology is 
known and symmetric, 
which may not always be the case. To capture 
model mismatch, BL and MKL use 
$\adjacencymat+\noiseadjacencymat$ with 
$\noiseadjacencymatentry\vertexvertexnot{\vertexind}{\vertexindp}\sim\mathcal{N}(0,0.05)$ 
instead of $\adjacencymat$.
\cmt{explain fig}
Fig.~\ref{fig:nmsedifmodel} shows the NMSE  of various approaches with increasing  
$\samplenum$, where $\samplenum\layernot{\layeridex}=\samplenum~\forall\layeridex$, 
and the baseline is the 
MKL that considers the true topology 
$\adjacencymat$. \cmt{Punchline} 
The reconstruction performance of JISG is superior compared to that of BL and MKL, and matches the baseline performance. Moreover,  the reported  CPU time  of JISG at 0.12 seconds was an order of magnitude faster than that of the MKL baseline  at 1.6 seconds.

\begin{figure}[t]
\hspace{-0.5cm}
\centering\includegraphics[width=9.2cm]{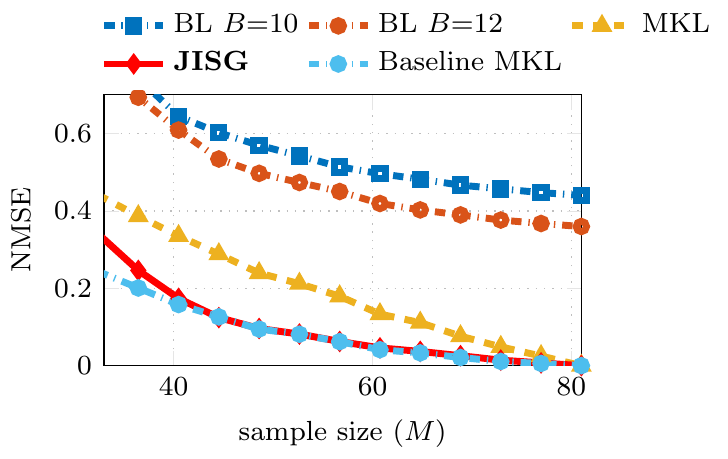}
\caption{Graph signal inference performance based on NMSE 
($\mu=10^4, \lambda_1= 0.5, \lambda_2=0.1$).} 
\label{fig:nmsedifmodel}
\end{figure}

\cmt{compare with topology id method that observes noisy signals EN-SEM}
For the same simulation setting, 
the topology inference 
performance was evaluated, by comparing  with the 
elastic net (EN) SEM that identifies the network topology from 
observations across all nodes, meaning $\{\observationvec\layernot{\layeridex}= 
\signalvec\layernot{\layeridex}\}_{\layeridex=1}^{\layernum}$.
Fig.~\ref{fig:eier} plots the EIER  with increasing  $\samplenum$ for 
JISG while EN-SEM uses 
$\samplenum=\vertexnum$.  The semi-blind novel approach achieves similar performance with the 
baseline, which can not cope with missing nodal measurements.

\begin{figure}[t]
\centering{\includegraphics[width=8.6cm]{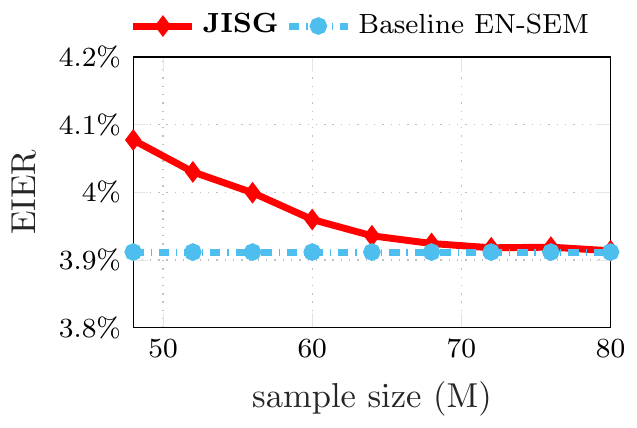}}
\caption{Network topology inference performance based on EIER. 
EN-SEM uses 
$\samplenum=\vertexnum$ and the same parameters 
$\lambda_1,\lambda_2$ as JISG.
($\mu=10^4, \lambda_1= 100, \lambda_2=1$).} 
%
\label{fig:eier}
\end{figure}
\begin{figure}
\begin{subfigure}[h]{0.45\linewidth}

\centering\includegraphics[width=1\linewidth]{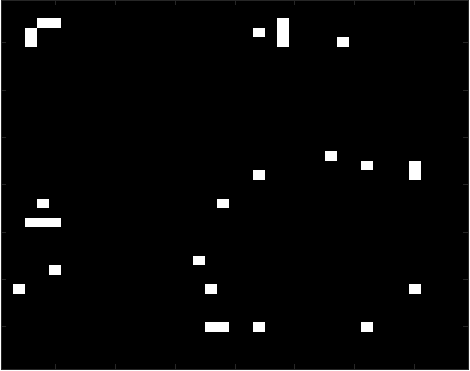}
\caption{EN-SEM.}
\label{fig:true}
\end{subfigure}%
\hspace{3mm}
\begin{subfigure}[h]{0.5\linewidth}
\centering\includegraphics[width=1\linewidth]{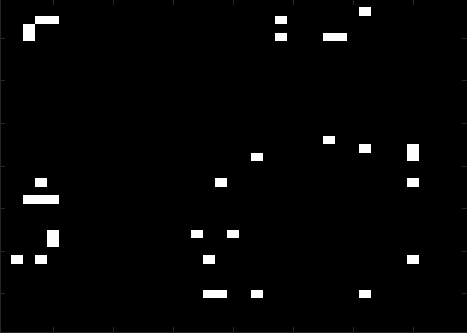}
\caption{{JISG}.}
\label{fig:est}
\end{subfigure}
\caption{Heatmaps of estimated adjacency matrices for the gene regulatory 
network.  
White (black) 
indicate  presence
(absence) of an edge. ($\mu=10^4, \lambda_1= 10^{-2}, \lambda_2=10^{-4}$).}
\label{fig:heat}
\end{figure}
\cmt{measure NMSE and edge prediction rate check yanning code}

\subsection{Gene regulatory network identification}
Further tests were conducted using real gene exrpession data~\cite{cai2013inference}.
\cmt{Graph description}Nodes in this network represent 
$\vertexnum=39$ 
immune-related
genes, while the measurements consist of gene expression
data from $\layernum=69$ unrelated Nigerian individuals.
\cmt{Signal description}The graph process 
$\signalfun\vertlayernot{\vertexind}{\layeridex}$ measures the expression level of gene 
$\vertexind$ for individual $\layeridex$. 
\cmt{Experiment}This experiment evaluates the topology inference performance of JISG 
with $\samplenum\layernot{\layeridex}=31$ genes for all individuals 
sampled at random. 
Since 
no ground-truth topology is available here, the estimated adjacency of 
EN-SEM, that relies on
all the  observations, was used for comparison. 
\cmt{explain fig}Fig.~\ref{fig:heat} depicts heatmaps of the 
estimated adjacencies.
\cmt{Punchline}As observed, JISG  learns 
a  topology similar to that identified by  EN-SEM, and imputes the missing values with
NMSE~$=0.017$. Therefore, our joint inference approach is capable of revealing causal dependencies even when gene expression data contain missing values.



\subsection{Temperature prediction}

Consider the National Climatic dataset, which comprises  hourly temperature measurements at $\vertexnum=109$ measuring stations across the continental United States in 2010 \cite{USATemp}. The value $\signalfun\vertimenot{\vertexind}{\timeind}$  represents here the $\timeind$-th temperature sample recorded at the $\vertexind$-th  station. For evaluating the JISGoT the cumulative NMSE (cNMSE) was used
\begin{align*}
\text{cNMSE}(\timenum):=\frac{\sum_{\timeind=1}^{\timenum}
||\signalvec\timenot{\timeind}
-\signalestvec\timenot{\timeind|\timeind}||^2_2}
{\sum_{\timeind=1}^{\timenum}	 
||\signalvec
\timenot{\timeind}||^2_2}.
\end{align*}
\begin{figure}[t]
\hspace{-1cm}\centering{\includegraphics[width=8.2cm]{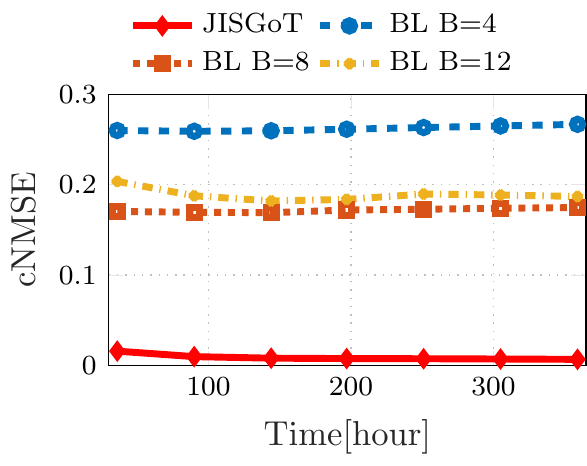}}
\caption{NMSE for temperature estimates
($\mu=100, \lambda_1= 10, \lambda_2=10^4$).} 
\label{fig:nmsetempbandot}
\end{figure}
Next, the proposed method is compared to the graph-bandlimited approach \cite{anis2016proxies,narang2013localized} for different bandwidth values $\bandwidth$, where  a time-invariant graph was constructed  as in 
\cite{romero2016spacetimekernel}, based on geographical distances.
Fig. \ref{fig:nmsetempbandot} reports the cNMSE performance of the estimators with $\samplenum=76$ for increasing $\timenum$. JISGoT learns the latent topology among sensors and outperforms the band-limited estimator since the temperature may not adhere to the band-limited model for the geographical graph.

\begin{figure}[t]
\hspace{-1cm}\centering{\includegraphics[width=8.2cm]{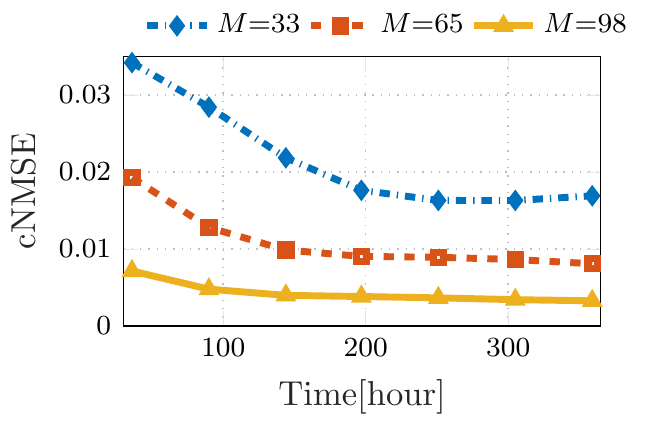}}
\caption{NMSE of JISG estimates for different $\samplenum$ 
($\mu=100, \lambda_1= 10, \lambda_2=10^4$).} 
\label{fig:nmsetempotindifS}
\end{figure}
Fig. \ref{fig:nmsetempotindifS} shows the cNMSE of JISGoT with variable $\samplenum$. As expected, the performance improves with increasing number of samples, while with just 30\% sampled stations the normalized reconstruction error is only 0.018. Hence, JISGoT can be employed to effectively predict missing sensor measurements.
\subsection{GDP prediction}
This experiment is carried over the gross domestic product (GDP) dataset~\cite{wordbank}, which comprises GDP per capita  for $\vertexnum=127$ countries  for the years  1960-2016. The process $\signalfun\vertimenot {\vertexind}{\timeind}$ now denotes the GDP  reported at the $\vertexind$-th  country and $\timeind$-th year for $\timeind=1960,\ldots,2016$. Fig. \ref{fig:nmsegdpotindifS} shows the cNMSE performance of our joint approach for different $\samplenum$. The semi-blind estimator unveils the latent connections among countries, while it reconstructs the GDP with cNMSE=0.05 when  60\% samples are available.
\begin{figure}[t]
\centering{\includegraphics[width=9cm]{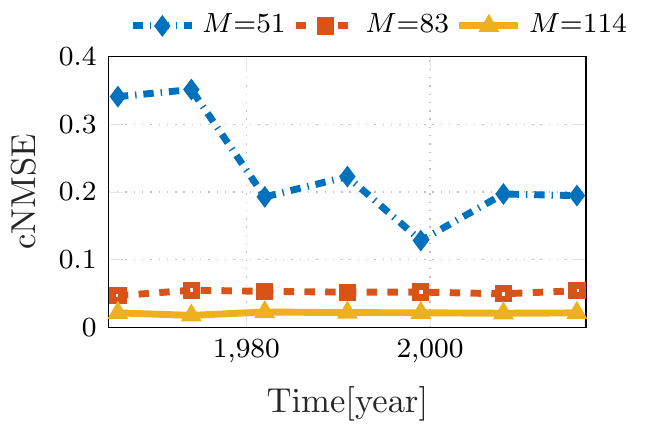}}
\caption{NMSE for GDP estimates 
($\mu=100, \lambda_1= 0.01, \lambda_2=1$).} 
\label{fig:nmsegdpotindifS}
\end{figure}
\begin{figure}[t]
\centering{\includegraphics[width=8.2cm]{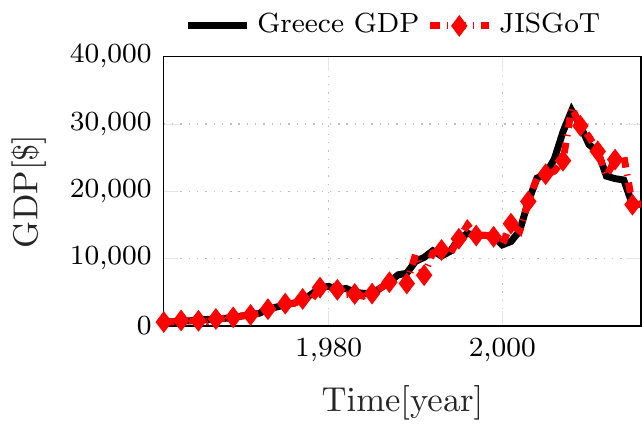}}
\caption{Greece GDP along with  JISG  estimate
($\mu=100, \lambda_1= 0.01, \lambda_2=1$).} 
\label{fig:nmsegdpottrack}
\end{figure}

Fig. \ref{fig:nmsegdpottrack} depicts the true values, along with the GDP estimates of Greece for $\samplenum=89$, which corroborates the effectiveness of JISGoT in predicting the GDP evolution and henceforth facilitating economic policy planning.

\subsection{Network delay prediction}
\cmt{Network Delay dataset}

\cmt{Description}
The  last dataset records  measurements of path delays on the Internet2 
backbone\cite{internet2}. The network comprises 9 end-nodes and $26$ directed 
links. The delays are available for $\vertexnum=70$ paths per minute. 
Function $\signalfun\vertimenot{\vertexind}{\timeind}$ denotes the delay in milliseconds measured at the $\vertexind$-th  path and
$\timeind$-th minute. 


The proposed JISGoT will be evaluated in estimating
the delay over the 
network from $\samplenum=49$ randomly sampled path delays. 
To that end, delay maps are traditionally employed, which depict the network 
delay 
per path over time 
and enable  operators to perform 
troubleshooting. The paths for the 
delay maps in Fig. 11
are sorted in  increasing 
order  of the true delay at $\timeind=1$. Clearly, the 
delay map recovered by  JISGoT in
Fig.~\ref{fig:delJsig} visually resembles the true delay 
map in Fig.~\ref{fig:deltrue}. 

\begin{figure*}[t!]\label{fig:deltot}
\centering
\begin{subfigure}[t]{0.5\textwidth}
\centering{\includegraphics[width=1\linewidth]{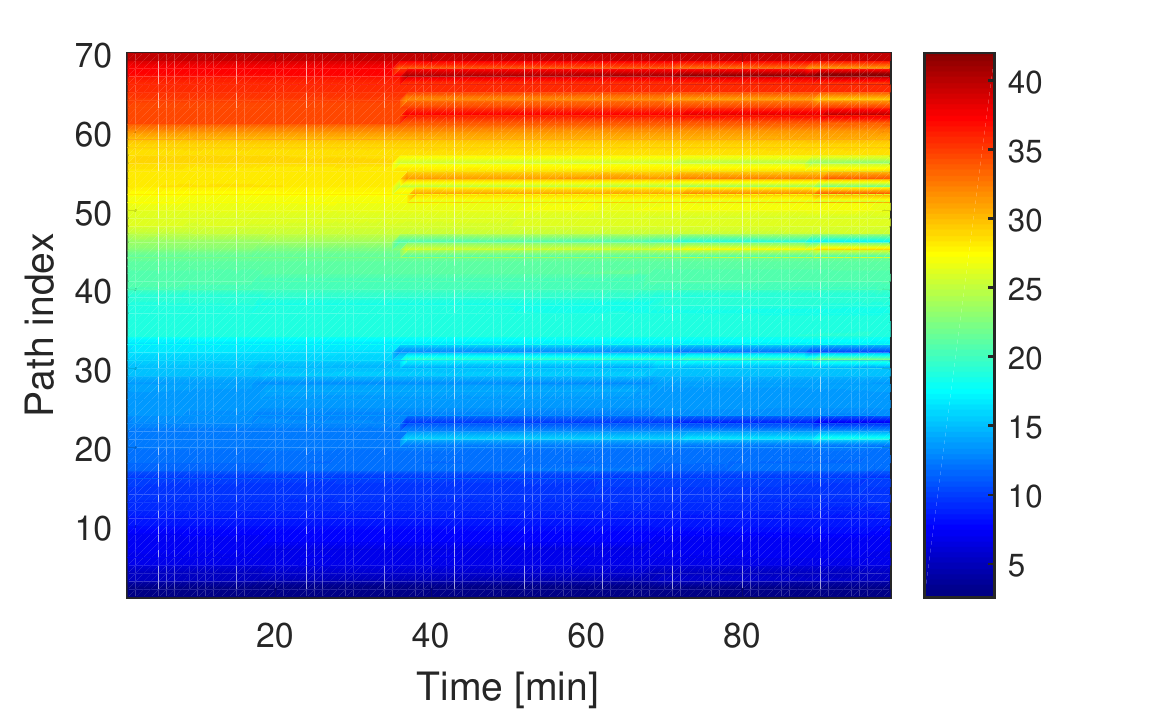}}
\caption{True delay} 
\label{fig:deltrue}
\end{subfigure}%
~ 
\begin{subfigure}[t]{0.5\textwidth}
\centering{\includegraphics[width=1\linewidth]{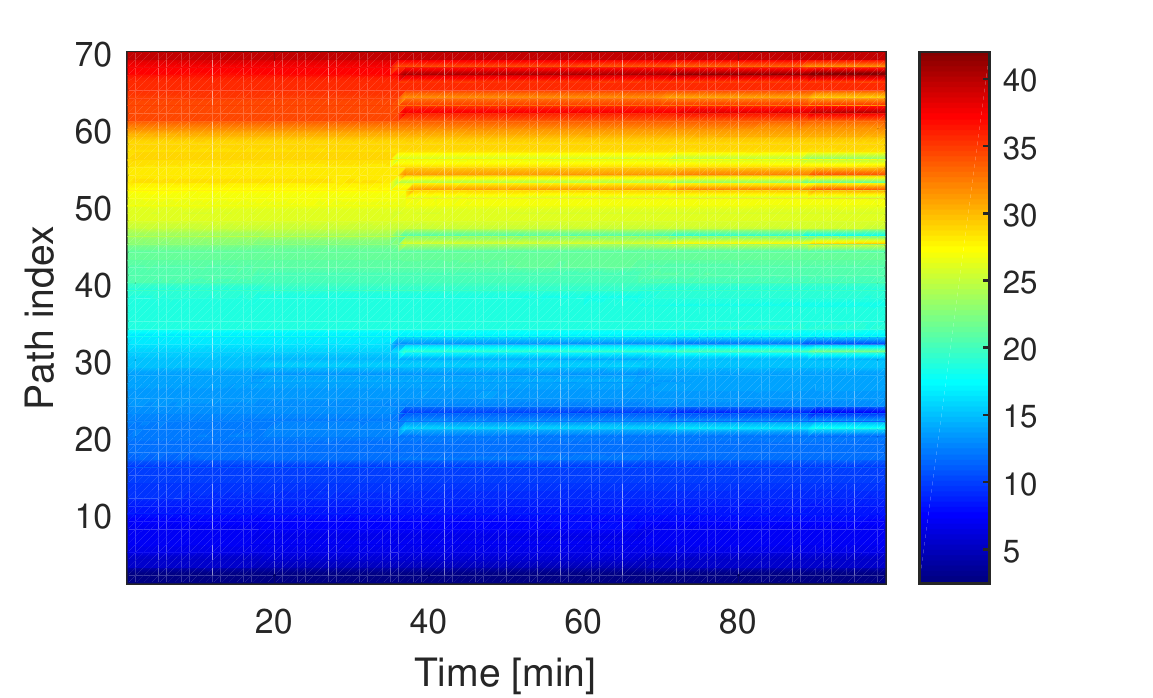}}
\caption{JISGoT} 
\label{fig:delJsig}
\end{subfigure}
\caption{True and estimated network delay map for $\vertexnum=70$ paths 
($\mu=100, \lambda_1= 0.01, \lambda_2=0.01$)}
\end{figure*}


\section{Conclusions and future work}
\label{sec:concl}
This paper puts forth a novel framework based on SVARMs and SEMs to jointly infer sparse directed network topologies, and even dynamic graph processes. Efficient  minimization approaches are developed with provable convergence that alternate between reconstructing the network processes and inferring the topologies using ADMM.  The framework was broadened  to facilitate real-time sequential joint-estimation by employing a fixed-lag solver. Recognizing the challenges related to  partially  observed processes, conditions under which the network can be uniquely identified were derived. Numerical tests on synthetic and real data-sets demonstrate the competitive performance of JISG and JISGoT in both inferring graph signals and the underlying network topologies. 

Future research could pursue learning nonlinear models of the network processes, and distributed implementation  of JISG, which is well-motivated, especially when dealing with large-scale networks.
\appendix
\subsection{ADMM solver for \eqref{eq:topinfersvarm}} 
Towards deriving the ADMM solver, consider 
$\signalmat_{\timenum}$ $:=[\signalvec\timenot{1},\signalvec\timenot{2},\ldots\signalvec 
\timenot{\timenum}]\transpose$, $\signalmat_{\timenum-1}$ 
$:=[\signalvec\timenot{0},\signalvec\timenot{1},\ldots\signalvec 
\timenot{\timenum-1}]\transpose$,  and the auxiliary variables  $\adjacencymathelp\lagnot{0}$ 
and $\adjacencymathelp\lagnot{1}$. Then, re-write \eqref{eq:topinfersvarm} as
\begin{align}
\label{eq:topinfersvarmrefor}
\hspace{-1cm}\underset{\adjacencymat\lagnot{0}, \adjacencymathelp\lagnot{0},
\atop\adjacencymat\lagnot{1},\adjacencymathelp\lagnot{1}
}{\min}&
\frac{1}{2}\|\signalmat_{\timenum}\transpose
-\adjacencymat\lagnot{0} \signalmat_{\timenum}\transpose
-\adjacencymat\lagnot{1}\signalmat_{\timenum-1}\transpose
\|_F^2\nonumber+\regparadjone\lagnot{0}\|\adjacencymathelp\lagnot{0}\|_1\\
+&\frac{\regparadjtwo\lagnot{0}}{2}\| 
\adjacencymat\lagnot{0}\|^2_F+\regparadjone\lagnot{1}\|\adjacencymathelp\lagnot{1}\|_1
+\frac{\regparadjtwo\lagnot{1}}{2}\| \adjacencymat\lagnot{1}\|^2_F\nonumber\\
\st~~& 
\adjacencymat\lagnot{0}=\adjacencymathelp\lagnot{0}-
\diag{\adjacencymathelp\lagnot{0}}, 
\adjacencymat\lagnot{1}=\adjacencymathelp\lagnot{1}.
\end{align}
The augmented Lagrangian of~\eqref{eq:topinfersvarmrefor} is 
\begin{align}
\label{eq:augmentLagtoinfersvar}
\hspace{-1cm}\mathcal{L}&=
\frac{1}{2}\|\signalmat_{\timenum}\transpose
-\adjacencymat\lagnot{0} \signalmat_{\timenum}\transpose
-\adjacencymat\lagnot{1}\signalmat_{\timenum-1}\transpose
\|_F^2\nonumber+\regparadjone\lagnot{0}\|\adjacencymathelp\lagnot{0}\|_1\\
+&\frac{\regparadjtwo\lagnot{0}}{2}\| 
\adjacencymat\lagnot{0}\|^2_F+\regparadjone\lagnot{1}\|\adjacencymathelp\lagnot{1}\|_1
+\frac{\regparadjtwo\lagnot{1}}{2}\| \adjacencymat\lagnot{1}\|^2_F\nonumber\\
+&\tr{{\admmhelp\lagnot{0}}\transpose(\adjacencymat\lagnot{0}- 
\adjacencymathelp\lagnot{0}+ 
\diag{\adjacencymathelp\lagnot{0}})}\nonumber
\\+
&\frac{\admmreg}{2}\|\adjacencymat\lagnot{0}- 
\adjacencymathelp\lagnot{0}+ 
\diag{\adjacencymathelp\lagnot{0}}\|_F^2\nonumber
\\+&\tr{{\admmhelp 
		\lagnot{1}}\transpose(\adjacencymat\lagnot{1}- 
	\adjacencymathelp\lagnot{1})}+\frac{\admmreg}{2}
\|\adjacencymat\lagnot{1}- 
\adjacencymathelp\lagnot{1}\|_F^2
\end{align}
where $\admmhelp\lagnot{0}$ and $\admmhelp\lagnot{1}$ denote Lagrange multiplier matrices, while 
$\admmreg>0$ is the penalty parameter. Henceforth, square brackets denote ADMM iteration 
indices.  The ADMM update for $\adjacencymat\lagnot{0}$ results from
${\partial\mathcal{L}}/{\partial\adjacencymat\lagnot{0}}=\bf 0$ that gives
\begin{align}
\label{eq:ad0update}
\adjacencymat\lagnot{0}
	\admmnot{\admmiter}
	(\cormat\timenot{T}&+(\regparadjtwo\lagnot{0}+\admmreg)\identitymat_\vertexnum) =
	\\\cormat\timenot{T}-&\adjacencymat\lagnot{1} 
	\admmnot{\admmiter-1}\crosscormat\transpose
	-\admmhelp\lagnot{0}\admmnot{\admmiter-1}+ 
	\admmreg\adjacencymathelp\lagnot{0}\admmnot{\admmiter-1}\nonumber
\end{align}
where $\cormat\timenot{T}:=\signalmat\timenot{T}\transpose\signalmat\timenot{T}$,  
$\crosscormat:=\signalmat\timenot{T}\transpose\signalmat\timenot{T-1}$. Similarly for 
$\adjacencymat\lagnot{1}$,  taking
${\partial\mathcal{L}}/{\partial\adjacencymat\lagnot{1}}= \bf 0$ results to
\begin{align}
\label{eq:ad1update}
\adjacencymat
 \lagnot{1}
\admmnot{\admmiter}
(\cormat\timenot{T-1}&+(\regparadjtwo\lagnot{1}+\admmreg) 
\identitymat_\vertexnum 
)=\nonumber\\\crosscormat-&
\adjacencymat\lagnot{0} 
\admmnot{\admmiter} \crosscormat
-\admmhelp\lagnot{1}\admmnot{\admmiter-1}+ 
\admmreg\adjacencymathelp\lagnot{1}\admmnot{\admmiter-1}
\end{align}
where $\cormat\timenot{T-1}:=\signalmat\timenot{T-1}\transpose\signalmat\timenot{T-1}$.  
The elementwise soft-thresholding operator is defined as
  \begin{align*}
  \mathcal{T}_\alpha(x)=\left\{
  \begin{array}{ll}
  x-\alpha, ~~x>\alpha\\
 0, ~~~~~~~|x|\le\alpha\\
  x+\alpha, ~~x<-\alpha.
  \end{array}
  \right.
  \end{align*}
  Accordingly, the update for $\adjacencymathelp\lagnot{0}$ is 
  \begin{align}
  \label{eq:adhelp0update}
  \adjacencymathelp\lagnot{0}\admmnot{\admmiter}&=\mathcal{T}_{
  	\regparadjone\lagnot{0}/\admmreg}
  (\adjacencymat\lagnot{0} 
  \admmnot{\admmiter}+\frac{1}{\admmreg}\admmhelp\lagnot{0}\admmnot{\admmiter-1}) 
  \nonumber\\
 -&\diag{\mathcal{T}_{\regparadjone\lagnot{0}/\admmreg}
 	(\adjacencymat\lagnot{0} 
 	\admmnot{\admmiter}+\frac{1}{\admmreg}\admmhelp\lagnot{0}\admmnot{\admmiter-1})}
  \end{align}
  and for  $\adjacencymathelp\lagnot{1}$ it is 
    \begin{align}
    \label{eq:adhelp1update}
    \adjacencymathelp\lagnot{1}\admmnot{\admmiter}&=\mathcal{T}_{\regparadjone
    	\lagnot{1}/\admmreg}
    (\adjacencymat\lagnot{1} 
    \admmnot{\admmiter}+\frac{1}{\admmreg}\admmhelp\lagnot{1}\admmnot{\admmiter-1}).
    \end{align}
    Finally, the Lagrange multiplier updates are given by   
    \begin{subequations}
    	\label{eq:helpupdate}
    	 \begin{align}
    \admmhelp\lagnot{0}\admmnot{\admmiter}=\admmhelp\lagnot{0}\admmnot{\admmiter-1} 
    +\admmreg(\adjacencymat\lagnot{0}\admmnot{\admmiter} -\adjacencymathelp\lagnot{0} \admmnot{\admmiter})\\
    \admmhelp\lagnot{1}\admmnot{\admmiter}=\admmhelp\lagnot{1}\admmnot{\admmiter-1} 
    +\admmreg(\adjacencymat\lagnot{1}\admmnot{\admmiter} -\adjacencymathelp\lagnot{1} \admmnot{\admmiter}).
     \end{align}
    \end{subequations}
    The complexity of \eqref{eq:ad0update} and \eqref{eq:ad1update} is 
    $\mathcal{O}(\vertexnum^3)$, while for \eqref{eq:adhelp0update}-\eqref{eq:helpupdate} is 
    $\mathcal{O}(\vertexnum^2)$ that leads to an overall per ADMM iteration complexity of 
    $\mathcal{O}(\vertexnum^3)$, governed by the updates for
    $\adjacencymat\lagnot{0}$ 
    and $\adjacencymat\lagnot{1}$. That brings the overall complexity of the algorithm to 
   $\mathcal{O}(I\vertexnum^3)$, where $I$ is the number of required ADMM iterations until convergence.  

\small
\clearpage
\bibliographystyle{IEEEtranS}
\bibliography{my_bibliography,net}
\noindent
\end{document}